\documentclass{article}

\usepackage{microtype}
\usepackage{algorithm,algorithmic}
\usepackage{graphicx}
\usepackage{subcaption}
\usepackage{booktabs}
\usepackage{tikz}
\usepackage{tikz-cd}
\usetikzlibrary{arrows.meta, calc}
\usepackage{mathtools, amsthm}
\mathtoolsset{showonlyrefs}

\usepackage{hyperref}

\newcommand{\bfx}{\boldsymbol{x}}
\newcommand{\bfa}{\boldsymbol{a}}

\newcommand{\rd}{\mathrm d}
\theoremstyle{definition}

\newcommand{\bfW}{\boldsymbol{W}}
\newcommand{\bfI}{\boldsymbol{I}}

\usepackage[preprint]{icml2026}

\usepackage{amsmath}
\usepackage{amssymb}
\usepackage{mathtools}
\usepackage{amsthm}

\usepackage[capitalize,noabbrev]{cleveref}

\theoremstyle{plain}
\newtheorem{theorem}{Theorem}[section]
\newtheorem{proposition}[theorem]{Proposition}

\theoremstyle{definition}

\theoremstyle{remark}

\usepackage[textsize=tiny]{todonotes}

\newcommand{\Flow}{\Phi}

\icmltitlerunning{Two-Parameter Flows}

\begin{document}

\twocolumn[
  \icmltitle{Two-Parameter Flows for Learning Population Dynamics of Physical Systems}

  \icmlsetsymbol{equal}{*}

  \begin{icmlauthorlist}
    \icmlauthor{Paul Schwerdtner}{yyy}
    \icmlauthor{Tobias Blickhan}{yyy}
    \icmlauthor{Benjamin Peherstorfer}{yyy}
  \end{icmlauthorlist}

  \icmlaffiliation{yyy}{Courant Institute of Mathematical Sciences, New York University, 251 Mercer Street, New York, NY 10012, USA}

  \icmlcorrespondingauthor{Benjamin Peherstorfer}{pehersto@cims.nyu.edu}

  \icmlkeywords{Two-parameter flows, Population dynamics inference, Conditional flow matching, Stochastic interpolants, Time-marginal learning, Non-gradient transport, Physics-time velocity fields}

  \vskip 0.3in
]

\printAffiliationsAndNotice{}  

\begin{abstract}
    This work addresses the problem of learning the dynamics of high-dimensional probability densities over time using unlabeled samples, without assuming access to trajectory information. 
    We introduce two-parameter flows that learn only sampling-time transports from a base distribution to each marginal and then extract a physics-time velocity  by regressing on coupled synthetic trajectories. We prove that the resulting physics-time dynamics are unique and inherit regularity from the sampling-time transports. Because we can build on standard, well-developed conditional flow matching techniques for learning the base-to-marginal transports, our approach scales to high dimensions and avoids per-step optimal-transport couplings, while allowing admissible non-gradient dynamics that can naturally explain rotational or circulating physics phenomena.
\end{abstract}

\section{Introduction}

\subsection{Learning Population Dynamics}

\paragraph{Motivation: Surrogate modeling} Learning dynamics from data is a central challenge in science and engineering, particularly for building surrogate models of complex physical systems. Such surrogates are essential for outer-loop applications such as inverse problems, uncertainty quantification, and design, which require repeated evaluations of the underlying dynamics across many parameter configurations and initial conditions \cite{PWG17MultiSurvey}. In these settings, relying solely on high-fidelity numerical solvers is often computationally prohibitive. Thus, even if a high-fidelity numerical model is available to generate training data, it is insufficient for solving outer-loop applications because it is too expensive for the many repeated simulations that are needed in the outer loop. The goal is therefore to infer dynamical laws from data that can be used to make fast and accurate predictions for previously unseen parameters and initial conditions within outer-loop applications.

\paragraph{Sample versus population dynamics} In deterministic settings where full trajectory data are available, learning the dynamics is often posed as fitting a model by minimizing point-wise discrepancy (e.g., mean-squared residual) along the training trajectories. We refer to this as learning the sample dynamics (or point-wise fitting) because the objective is to approximate the individual training trajectories \cite{li2021fourierneuraloperatorparametric,Lu2021,KOVACHKI2024419}. In contrast, we focus on stochastic and chaotic systems, for which learning dynamics of the individual trajectories is often not meaningful and uninformative (see, e.g., \citet{neklyudov_action_2023,blickhan2025dicediscreteinversecontinuity} for a discussion). Accordingly, we seek models whose dynamics agree with the training data in law, which means that the goal is not predicting individual sample trajectories but to learn the population dynamics that describe the evolution of the underlying probability law. 

\paragraph{Example: Population dynamics of random walk}
A simple example illustrates both the usefulness of population dynamics and the limitations of sample-trajectory learning. Consider the stochastic differential equation (SDE) $\mathrm d \bfx_t = \sigma \mathrm d \mathbf{W}_t$ and assume $\bfx_0 \sim \mathcal{N}(0, \bfI)$ is independent of the Wiener process $\bfW_t$. The law of the corresponding samples is $\rho(t) = \mathcal{N}(0, (1 + t\sigma^2)\bfI)$. The sample trajectories (paths) of this system are almost surely nowhere differentiable, making a point-wise regression on the sample trajectories inherently rough. In contrast, the same marginal evolution can be generated by a deterministic system of ordinary differential equations (ODEs) of the form $\dot{\bfx}_t = u(\bfx_t, t)$ with the velocity $u(\bfx_t, t) = \sigma^2 \bfx_t/(2(1 + \sigma^2 t))$ and initial condition $\bfx_0 \sim \rho(0)$, whose solutions are smooth and given explicitly by $\bfx_t = \sqrt{1 + \sigma^2 t}\bfx_0$. Thus, while the sample trajectories are challenging (and in a precise sense impossible) to capture point-wise, the associated population dynamics admit a simple, smooth representation that is tractable to infer from data. 

\paragraph{Inferring population dynamics}
Given (samples of) time marginals $(\rho(t))_{t \in [0, T]}$ formulated over the domain $\mathcal{X} \subseteq \mathbb{R}^d$ that admit a density $\rho(\cdot, t): \mathcal{X} \to \mathbb{R}$, we seek a velocity field $u: \mathbb{R}^d \times [0, T] \to \mathbb{R}^d$ such that the induced flow ordinary differential equation (ODE)
\[
\frac{\mathrm d}{\mathrm dt} \bfx_t = u(\bfx_t, t)\,,\qquad \bfx_0 \sim \rho(0)\,,
\]
induces the prescribed marginals $(\rho(t))_t$ in the sense that $\bfx_t \sim \rho(t)$ for all $t \in [0, T]$. A velocity field $u$ is admissible for $(\rho(t))_t$ if it satisfies the continuity equation
\begin{align}
\label{eq:conti_eq}
    \partial_t \rho(\bfx, t) + \nabla \cdot (\rho( \bfx, t) u(\bfx, t)) = 0\,,\qquad \text{ on } \mathcal{X}\,,
\end{align}
in the sense of distributions.

We note that admissible velocity fields are generally not unique. Indeed, if $u$ is admissible and $w$ satisfies $\nabla\cdot(\rho w)=0$, then $u+w$ is also admissible. 
This non-uniqueness highlights that learning population dynamics typically requires an additional selection principle (e.g., structural constraints or optimality criteria) to identify a canonical representative among admissible vector fields.

\begin{figure}[t]
\centering
\begin{tikzpicture}[scale=0.9,>=Stealth, thick]

    \coordinate (X) at (0, 0);           
    \coordinate (TL) at (-1.8, 2.5);       
    \coordinate (TR) at (1.6, 2.5);        

    \draw (X) to[out=115, in=-85, gray] (TL);
    
    \draw (X) to[out=40, in=-100, gray] (TR);
    
    \draw (TL) to[out=20, in=160] (TR);

    \fill (X) circle (4pt) node[below=5pt] {$\bfa$};
    \fill (TL) circle (4pt) node[below right] {$\bfx_{t,s} = \Phi(\bfa, t, s)$};
    \fill (TR) circle (4pt) node[above right] {$\bfx_{t',s}$};

    \draw[->, blue] (TL) -- ++(95:1.2cm) 
        node[above] {$v(\bfx_{t,s}, t, s) 
        $};
        
    \draw[->, purple] (TL) -- ++(20:1.7cm) 
        node[above=-2pt] {$u(\bfx_{t,s}, t, s) 
        $};

    \draw[->, blue] (TR) -- ++(80:1.1cm) 
        node[above] {$v(\bfx_{t',s}, t', s)$};
        
    \draw[->, purple] (TR) -- ++(-20:1.0cm) 
        node[below] {$u(\bfx_{t',s}, t', s)$};

\end{tikzpicture}
\caption{Learn vertically, predict horizontally: The vertical flow in sampling time $s$ along $v: \partial_s \Flow = v \circ \Flow$ is learned through existing, scalable methods from generative modeling. After training, we have implicitly defined a horizontal flow $u \circ \Phi = \partial_t \Phi$ which we can make explicit in a second regression step and use for rapid inference in physics time $t$.}
\label{fig:TwoParamFlow}
\end{figure}

\begin{figure}[t]
 \includegraphics[width=0.9\linewidth]{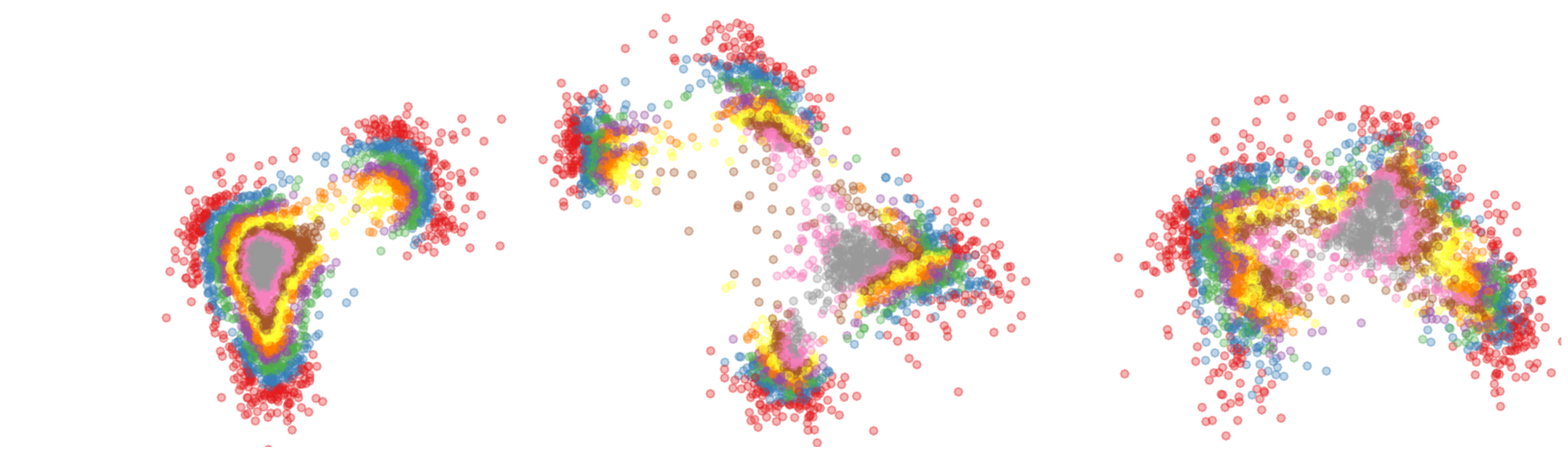} \\
    \includegraphics[width=0.9\linewidth]{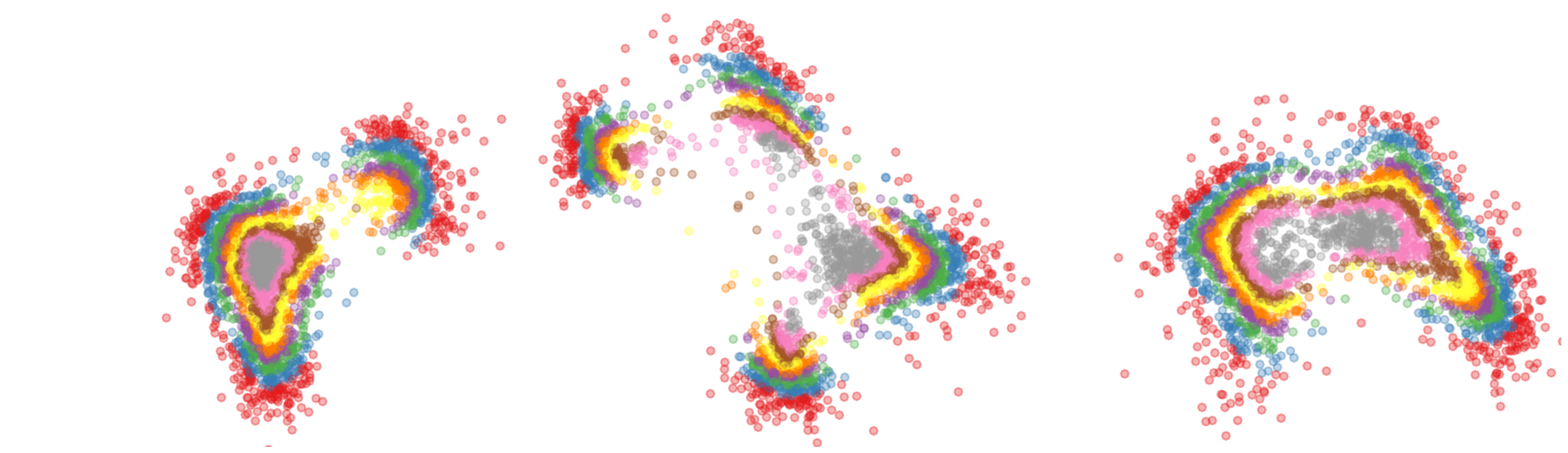}
    \caption{Snapshots from a curve $t \mapsto \rho(t)$ (left to right: $t \in [0, 1]$). The first row displays piecewise optimal transport: samples retain their color for all $t$ and move between marginals following piecewise OT as in \cite{terpin_2024_learning}. The second row shows that TPF dynamics are similar, but more regular: Especially for later times, the flow defined by successive OT mappings develops sharp gradients in $\rd \bfx_{t,1} / \rd \bfa$.}
    \label{fig:tpf_vs_ot}
\end{figure}

\subsection{Our Approach: Two-Parameter Flows}

We introduce a two-parameter viewpoint to learning population dynamics: we learn the transport of a reference (base) distribution to each time marginal $\rho(t)$, and then use the induced structure to obtain dynamics in the physics-time-direction to evolve $\rho(t)$ over time $t$. The transport from reference to time marginals can be obtained with, e.g., a standard conditional flow matching model and stochastic interpolants \cite{lipman2022flow,albergo2023building} that dynamically maps $\nu$ to $\rho(t)$ for all $t$.  

Our key observation is that once the transport from the base to the $\rho(t)$ for all $t$ is fixed, there is a unique consistent way to connect these transports across physics time $t$. In other words, specifying the flow in one direction (from base to $\rho(t)$) implicitly selects a single, well-defined physics-time velocity among the many admissible ones compatible with the marginals $(\rho(t))_t$ via \eqref{eq:conti_eq}. 

We show that the induced physics-time velocity field is guaranteed to produce the correct marginal evolution and it inherits regularity from the transport from base to $\rho(t)$. In particular, we prove regularity bounds for the physics-time velocity field that can be expressed directly in terms of the regularity of the trained (conditional flow matching/stochastic-interpolants) velocity from base to time marginals, which means that if the transport from base to time marginals is regular then so are the dynamics in physics time. 

Overall, our two-parameter flow approach provides a principled and scalable pipeline for learning population dynamics: train a sampling-time (vertical) conditional flow with established and mature tools that scale, then extract the corresponding physics-time (horizontal) dynamics for generating new trajectories for new initial conditions and parameters; see Figure~\ref{fig:TwoParamFlow}.

\subsection{Related Work and Optimal Transport}

\paragraph{Minimal kinetic energy velocity fields} Existing literature focuses to a large part on  the minimal kinetic energy field, which is a specific admissible velocity field that is defined as the unique minimizer of
\begin{align}\label{eq:MinKinEnergy}
    \mathcal{E}(u) = \frac12 \int_0^1 \int |u(x, t)|^2 \rho(x, t)\, \mathrm dx \, \mathrm dt
\end{align}
while having the continuity equation as a constraint among all admissible $u$. This minimizer is of gradient form so that one can represent it as $u_{\min}=\nabla \varphi$ with a potential $\varphi: \mathbb{R}^d \times [0, T] \to \mathbb{R}$. 

\vspace*{-0.25cm}\paragraph{Methods that learn a potential} \citet{neklyudov_action_2023, berman_parametric_2024, blickhan2025dicediscreteinversecontinuity} aim to infer this particular minimal-kinetic energy velocity field. In particular, these methods hard-code the gradient structure $u = \nabla \varphi$ by parametrizing and then learning the potential $\varphi$. 

\vspace*{-0.35cm}\paragraph{Pairwise optimal-transport couplings} A different line of work, e.g., \cite{terpin_2024_learning}, is motivated by optimal transport (OT) theory. Here, the OT couplings between the point clouds at successive times is computed. After these couplings have been computed, one can learn a velocity field based on the trajectories defined by successive application of the OT maps. Since infinitesimal optimal-transport maps along a curve $t \mapsto \rho(t)$ converge to the minimal energy admissible vector field in the limit $\delta t \to 0$ \citep[Proposition 8.4.6]{ambrosio_gradient_2005}, this approach ultimately targets the minimum kinetic energy field that is admissible. Another way to view this particular choice is to learn the potential of a time-dependent Wasserstein gradient flow. These methods are usually named after the Jordan-Kinderlehrer-Otto formalism \cite{jordan_variational_1998} and have received considerable attention in recent years \cite{tong_trajectorynet_2020, alvarez-melis_optimizing_2021, bunne_proximal_2022, lavenant_towards_2023, persiianov_learning_2025}.

\vspace*{-0.35cm}\paragraph{Schr\"odinger Bridges} The stochastic analogue to JKO methods, Schr\"odinger bridge matching connects time marginals $\rho(t_j)$ and $\rho(t_{j+1})$ via stochastic processes instead of OT maps \cite{chen_multi-marginal_2019, chen_deep_2023, shen_multi-marginal_2025, hong2025trajectory}. This approach treats inference as an SDE integration. The resulting vector fields also have gradient form, but follow a distinct selection criterion, see \citep[Definition 2.1]{leonard_survey_2014} and \citep[Chapter~7]{villani_optimal_2009}.

\vspace*{-0.35cm}\paragraph{Autoregressive generative modeling of trajectory data and other related literature} There is a range of methods \cite{hoogeboom2021autoregressive,ruhling2023dyffusion,chen2024probabilistic,Price2025} for learning autoregressive diffusion- and flow-based models, building on, e.g., \cite{sohl-dickstein15,HOjon_ddpm,hyvarinen05a,song2019sliced,song2020generativealg,song2021scorebased,albergo2023stochastic,lipman2022flow, liu_flow_2023}. These approaches aim to learn a conditional law that describes the transition from a current state to the next, which is a fundamentally different problem than what we consider and more akin to learning sample-level trajectory dynamics than population dynamics. As a side remark, we note that there are methods that learn the score of a distribution and additionally use knowledge of the drift and/or diffusion coefficient to derive a generative model \cite{NEURIPS2023_b2b781ba}. We do not have access to the drift and diffusion coefficient in the following, only to data. Furthermore, we note that there is work on learning time-conditioned normalizing flows  \cite{ICLR2024_7718914d}; in contrast, our use of transport from the base measure to time marginals is only a scalable first step toward extracting an explicit physics-time velocity.

\subsection{Two-Parameter Flows and Optimal tTransport}\label{sec:Intro:TPFNotOT}

\paragraph{Why we do not target minimal-kinetic-energy fields}
The optimal-transport/minimal kinetic energy field is mathematically canonical, but it is often not a modeling requirement. As $u$ is not identifiable without additional structure, choosing the minimal-kinetic energy representation is one principled convention, but it is only one convention.

From a learning perspective, targeting the minimal-kinetic energy representation can be unnecessarily restrictive, and even expensive to compute (e.g., computing optimal-transport couplings scales typically cubic in the number of the data points and is intractable for large enough data sets \citep[Section C.2]{persiianov_learning_2025}). Furthermore, adversarial examples exist where the gradient field can be shown to oscillate arbitrarily fast in the case of finite samples (see \citet{gigli_second_2011}, Example 2.4). This shortcoming of existing methods is also pointed out in \citep[Section G2]{terpin_2024_learning}. In particular, there is no a priori guarantee that $u$ is smooth \citep[Section 6.2]{ambrosio_gradient_2005}, see Figure \ref{fig:tpf_vs_ot}.

The recent work \cite{petrovic_curly_2025} provides a mechanism for learning admissible velocity fields with non-gradient components, but it operates in a different regime: it leverages a known reference drift in the form of sample velocities in the training data. In contrast, our setting does not assume access to such a prior. Our approach is fully generic and does not require specifying a reference model for the physics-time velocity field $u$. We note that the works \cite{benamou_second-order_2019, chewi_fast_2021} replace piecewise optimal-transport couplings with higher-order optimal-transport constructions, such as spline-like curves characterized by minimal acceleration. These formulations can produce temporally smoother population-level trajectories, but they still require solving an OT problem (typically one per marginal or time interval), and thus retain the computational bottleneck associated with OT-based methods.

\vspace*{-0.35cm}\paragraph{Two-parameter flows: admissible and practically learnable but not necessarily of minimal energy}
Our approach does not attempt to recover the optimal-transport/minimal-kinetic-energy velocity field. Instead, we use two-parameter flows to construct an admissible transport from a base distribution to each time marginal and then extract a corresponding velocity field for sampling at inference time. This decouples distribution matching from optimal-transport coupling computation and avoids committing to a gradient, minimal-energy field.

Crucially, many population dynamics are naturally expressed through admissible fields with non-gradient components (e.g., rotational transport), which OT excludes by design. Because our learned field need not coincide with the minimal-kinetic-energy velocity in physics time, it can represent such effects while remaining consistent with the observed marginals. Empirically, this additional flexibility can also yield smoother fields than OT-based constructions; see again Figure~\ref{fig:tpf_vs_ot}.

\vspace*{-0.15cm}\subsection{Summary of Contributions}
(a) We introduce two-parameter flows that reduce population dynamics inference to (i) a standard conditional flow training from base to each marginal and (ii) a standard regression step that extracts an explicit physics-time velocity; scaling to turbulence examples in high dimension $> 10^4$.

(b) We prove that the extracted physics-time velocity inherits regularity from the learned sampling-time transport, providing a principled guarantee of well-posed and regular physics-time dynamics.

(c) Unlike optimal-transport/minimal-kinetic-energy approaches, the resulting dynamics are not restricted to gradient flows, which avoids per-step optimal-transport coupling computations and enables broader and often smoother physics-time dynamics.

\section{Two-Parameter Flows}
We introduce two-parameter flows that depend on a sampling time $s$ as well as the physics time $t$. Our aim is to learn only the sampling-direction transport from a base distribution to each time marginal $\rho(t)$. Consistency conditions for regular flow maps then uniquely determine the corresponding physics-time dynamics. We show that the induced dynamics in physics time inherit regularity from the learned sampling-direction transports.

\subsection{Sampling-Time Velocity and Flow Maps}\label{sec:subTwoParamFlow}

Recall that we have a time-dependent distribution $(\rho(t))_{t \in [0, T]}$ and a base distribution $\nu$. Furthermore, let us assume we have obtained a velocity field $v: \mathbb{R}^d \times [0, T] \times [0, 1] \to \mathbb{R}^d$  so that
\begin{align}\label{eq:FlowOfV}
\frac{\mathrm d}{\mathrm ds} \bfx_{t,s} = v(\bfx_{t,s}, t, s),\quad\bfx_{t,s} \big |_{s = 0} = \bfa \sim \nu\,,
\end{align}
generates samples $\bfx_{t,s} \big |_{s = 1} \sim \rho(t)$ at the final sampling time for all $t \in [0, T]$. The velocity field $v$ acts in direction $s$ over $s \in [0, 1]$ and is conditioned on $t \in [0, T]$.

We refer to $s$ as sampling time and to $t$ as physics time. Furthermore, to build intuition, it is helpful to visualize the physics time flowing horizontally and the sampling time flowing vertically as in Figure~\ref{fig:TwoParamFlow}.

The velocity field $v$ induces a two-parameter flow  $(\bfa, t, s) \mapsto \Flow(\bfa, t, s)$ as
\begin{align}\label{eq:DefOfFlow}
\Phi(\bfa, t, s) = \bfa + \int_0^s v(\Phi(\bfa, t, \sigma), t, \sigma) \, \mathrm d\sigma \, , \quad \bfa \sim \nu\,,
\end{align}
that pushes forward $\nu$ to $\rho(t)$ as $\Flow(\cdot, t, 1) \sharp \nu = \rho(t)$ for all $t \in [0, T]$. 
Additionally, the flow satisfies 
\begin{align}\label{eq:FlowCondS0}
\Flow(\bfa, t, 0) = \bfa\,,\qquad t \in [0, T], \; \bfa \sim \nu\,.
\end{align} 
We call a two-parameter flow $\Phi$ a $C^2$-diffeomorphism if $\Phi$ is twice continuously differentiable in all arguments and $\Phi(\cdot, t, s)$ is a diffeomorphism for all $t \in [0, T]$ and $s \in [0, 1]$. Because the regularity of $\Phi$ is governed by the regularity of $v$ \citep[Lemma 4.1.9]{abraham_manifolds_1988}, the flow defined through \eqref{eq:DefOfFlow} is a $C^2$-diffeomorphism for sufficiently regular $v$, i.e. of class $C^2$ in all arguments with a global bound on $\nabla v$. We also assume that $\nabla^2 v$ is bounded for convenience.

\subsection{Physics-Time Velocity $u$}\label{sec:RegularityV}

The flow $\Phi$ describes how a sample $\bfa \sim \nu$ moves when we vary either sampling time $s$ or physics time $t$. Correspondingly, we can use $\Phi$ to derive sampling- and physics-time velocity fields: The sampling-time (vertical) velocity is the field $v$ given in \eqref{eq:FlowOfV},  which can also be written as 
\begin{align}\label{eq:DefVViaPhi}
    \partial_s \Flow(\bfa, t, s) = v(\Flow(\bfa, t, s), t, s).
\end{align}
Analogously, we can take the derivative of $\Phi$ in physics time $t$ to define a field $u$ via 
\begin{equation}\label{eq:VelocityU}
        \partial_t \Flow(\bfa, t, s) = u(\Flow(\bfa, t, s), t, s)\,.
\end{equation}
The following proposition shows that \eqref{eq:VelocityU} indeed defines a unique velocity field $u: \mathbb{R}^d \times [0, T] \times [0, 1] \to \mathbb{R}^d$ if the flow is sufficiently regular. Furthermore, this velocity $u$ induces a continuity equation for the time marginals $\rho(t)$ (proof in appendix). 
\begin{proposition}
\label{prop:uniqueness}
Let $v$ be of class $C^2$ 
and suppose the flow $\Flow$ \eqref{eq:DefOfFlow} of $v$ has lifetime one \citep[Definition~4.1.15]{abraham_manifolds_1988}. Then, $\Phi(\cdot, t, s)$ is a $C^2$-diffeomorphism and equation \eqref{eq:VelocityU} defines a function $u: \mathbb{R}^d \times [0, T] \times [0, 1] \to \mathbb{R}^d$, this function $u$ is unique, and it satisfies the continuity equation in a weak sense $\partial_t \rho(t) + \nabla \cdot (\rho(t) u(\cdot, t, 1)) = 0$.
\end{proposition}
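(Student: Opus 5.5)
We prove the three claims in order: regularity of $\Flow$, existence and uniqueness of $u$, and the continuity equation.

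\textbf{Regularity of $\Flow$.} Treat $t$ as a parameter. Augment the ODE \eqref{eq:FlowOfV} with the trivial equation $\frac{\mathrm d}{\mathrm ds} t = 0$, which gives an autonomous-in-parameter system on $\mathbb{R}^d\times[0,T]$ with a $C^2$ vector field. By smooth dependence of flows on initial data and parameters \citep[Lemma 4.1.9]{abraham_manifolds_1988}, $\Flow$ is $C^2$ in $(\bfa,t,s)$ on its domain. The lifetime-one assumption says that this domain contains all $s\in[0,1]$, for every $\bfa$ and every $t$. For fixed $(t,s)$, the map $\Flow(\cdot,t,s)$ has an inverse: integrate $v$ backward from $s$ to $0$. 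This backward solution exists because forward solutions exist for all starting points and are unique. The inverse is again $C^2$ by the same lemma. Hence $\Flow(\cdot,t,s)$ is a $C^2$-diffeomorphism.

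\textbf{Definition and uniqueness of $u$.} Write $\Flow^{-1}(\cdot,t,s)$ for the spatial inverse and set
\begin{equation}
u(\bfx,t,s) \coloneqq \partial_t\Flow\bigl(\Flow^{-1}(\bfx,t,s),t,s\bigr).
\end{equation}
This satisfies \eqref{eq:VelocityU} by construction. Since $\Flow$ is $C^2$, the inverse $\Flow^{-1}$ is $C^1$ jointly in $(\bfx,t,s)$ by the implicit function theorem, so $u$ is at least $C^1$.

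For uniqueness, suppose $\tilde u$ also satisfies \eqref{eq:VelocityU}. Then $\tilde u(\Flow(\bfa,t,s),t,s)=u(\Flow(\bfa,t,s),t,s)$ for all $\bfa$. Surjectivity of $\Flow(\cdot,t,s)$ onto $\mathbb{R}^d$ then gives $\tilde u=u$ everywhere. The one point that needs care is that the base points $\bfa$ range over all of $\mathbb{R}^d$, not just over $\operatorname{supp}\nu$. Otherwise uniqueness holds only $\rho(t)$-a.e., and we would state it that way.

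\textbf{Continuity equation.} Fix a test function $\varphi\in C_c^\infty(\mathbb{R}^d)$. Since $\Flow(\cdot,t,1)\sharp\nu=\rho(t)$,
\begin{equation}
\int\varphi\,\rho(t)\,\mathrm d\bfx=\int\varphi\bigl(\Flow(\bfa,t,1)\bigr)\,\mathrm d\nu(\bfa).
\end{equation}
Differentiate in $t$ under the integral. The derivative of the integrand is
\begin{equation}
\nabla\varphi\bigl(\Flow(\bfa,t,1)\bigr)\cdot\partial_t\Flow(\bfa,t,1)=\nabla\varphi\bigl(\Flow(\bfa,t,1)\bigr)\cdot u\bigl(\Flow(\bfa,t,1),t,1\bigr).
\end{equation}
Pushing forward again, $\frac{\mathrm d}{\mathrm dt}\int\varphi\,\rho(t)\,\mathrm d\bfx=\int\nabla\varphi\cdot u(\cdot,t,1)\,\rho(t)\,\mathrm d\bfx$, which is the weak form of $\partial_t\rho+\nabla\cdot(\rho u)=0$. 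Integrating against a time-dependent test function gives the space-time distributional statement.

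\textbf{Main obstacle.} The hardest step is justifying differentiation under the integral. We need $|\nabla\varphi|\,|\partial_t\Flow|$ to be dominated by a $\nu$-integrable function, uniformly for $t$ in compact sets. Boundedness of $\nabla\varphi$ is free; the issue is controlling $\partial_t\Flow$. Differentiating \eqref{eq:DefOfFlow} in $t$ gives the linear ODE
\begin{equation}
\partial_s\partial_t\Flow=\nabla v(\Flow,t,s)\,\partial_t\Flow+\partial_t v(\Flow,t,s),\qquad \partial_t\Flow\big|_{s=0}=0.
\end{equation}
Gronwall's inequality with the global bound on $\nabla v$ then yields
\begin{equation}
|\partial_t\Flow|\le e^{L}\sup_{\sigma\in[0,1]}|\partial_t v(\Flow(\bfa,t,\sigma),t,\sigma)|,
\end{equation}
where $L$ bounds $\nabla v$. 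This reduces the problem to bounding $\partial_t v$ along the flow. We first try the following: since $\varphi$ has compact support $K$, only those $\bfa$ with $\Flow(\bfa,t,1)\in K$ contribute. This set is contained in a compact set by continuity of $\Flow^{-1}$ in $(\bfx,t)$, and there $\partial_t v$ is bounded by continuity. This gives the domination without any growth assumption on $\partial_t v$.
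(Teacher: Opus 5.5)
Your argument follows the paper's route: regularity of the flow from the Abraham--Marsden--Ratiu results, $u := \partial_t\Phi\circ\Phi^{-1}(\cdot,t,s)$ with uniqueness from surjectivity, and the weak continuity equation obtained by differentiating $\int\varphi\,\mathrm d\rho(t)=\int\varphi(\Phi(\bfa,t,1))\,\mathrm d\nu(\bfa)$ in $t$. Your compact-support domination argument also supplies the justification for differentiating under the integral, which the paper omits.

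One step is justified incorrectly, however. Forward existence on $[0,1]$ from every starting point, together with uniqueness, does \emph{not} imply that the backward solution from every $\bfx$ at sampling time $s$ reaches $s=0$. Take $d=1$ and $v(x)=-x^3$. Every forward solution $x(s)=a/\sqrt{1+2a^2 s}$ exists for all $s\ge 0$. Yet $\Phi(\cdot,1)$ maps $\mathbb{R}$ only onto $(-1/\sqrt{2},1/\sqrt{2})$, so it is a diffeomorphism onto its image, not onto $\mathbb{R}$. This affects two of your steps. First, $u$ is then defined and unique only on the image of $\Phi(\cdot,t,s)$. Second, your domination step breaks, because the preimage of a compact $K$ need not be compact; here the preimage of $[-0.8,0.8]$ is all of $\mathbb{R}$.

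The repair is easy. Use the standing assumption of Section~\ref{sec:subTwoParamFlow} that $\nabla v$ is globally bounded, which you already invoke in your Gronwall step. It gives the linear growth bound $|v(\bfx,t,s)|\le C+L|\bfx|$ with $C=\max_{t,s}|v(0,t,s)|$. Hence backward solutions also exist on $[0,s]$, and $\Phi(\cdot,t,s)$ is onto $\mathbb{R}^d$ with a continuous inverse in $(\bfx,t)$. Alternatively, read ``lifetime one'' as including backward evolution between any two sampling times in $[0,1]$. With this in place, the rest of your proof is correct.
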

Notice that \eqref{eq:FlowCondS0} with Proposition~\ref{prop:uniqueness} implies $u(\cdot, t, 0) \equiv 0$.

Proposition~\ref{prop:uniqueness} shows that if a sufficiently regular sampling-time velocity $v$ is determined, which transports the base distribution $\nu$ to any marginal $\rho(t)$, then the physics-time dynamics are uniquely determined and the corresponding physics-time velocity $u$ solves  \eqref{eq:VelocityU}. In other words, it is sufficient to learn in sampling-time direction $s$ how to transport the base distribution to the time marginals to obtain the physics-time dynamics in form of one particular velocity $u$. 

We remark that under the conditions of Proposition~\ref{prop:uniqueness}, the flow $\Phi$ also defines a pushforward from $\rho(t)$ to $\rho(t')$ as $\left( \Flow(\cdot, t', 1) \circ \Flow^{-1}(\cdot , t, 1) \right) \sharp \rho(t) = \rho({t'}),$ which first maps a sample $\bfx_t \sim \rho(t)$ onto a noise sample $\bfa$ of $\nu$ and then transports it back to a sample $\bfx_{t'}$ of $\rho({t'})$ at physics time $t'$. 

As we will see in Proposition \ref{prop:AnalyticU}, the velocity $u(\cdot, t, 1)$ can also be integrated from $t$ to $t' > t $ to transform a sample from $\rho({t})$ into one from $\rho({t'})$.

\subsection{Beyond Conditioning: Flow Consistency in $s$ and $t$}
With a two-parameter flow $\Flow$ we can describe how a sample evolves when we vary either the sampling time $s$ or the physics time $t$. In particular, we want that transporting along the vertical $s$-time and horizontal $t$-time directions defines a well-posed two-dimensional motion, rather than two unrelated one-dimensional motions.

We can formalize this notion of consistency between $s$ and $t$ direction via a Schwarz condition as follows: Recall that we have the associated velocities $\partial_s \Phi = v \circ \Phi$ and $\partial_t \Phi = u \circ \Phi$ given in \eqref{eq:DefVViaPhi} and \eqref{eq:VelocityU}, respectively. A flow $\Phi$ is consistent if the mixed derivatives commute,
 \begin{align}\label{eq:SchwarzComp}
        \partial_t\partial_s  \Phi(\cdot, t, s) \equiv \partial_s\partial_t  \Flow(\cdot, t, s) \,,
    \end{align}
for all $t \in [0, T]$ and $s \in [0, 1]$, 
which formalizes the idea that moving in $s$ and then $t$ equals moving in $t$ and then $s$ when we make infinitesimal steps.

Schwarz' theorem states that mixed derivatives commute if the function is twice continuously differentiable. Thus, if the velocity $v$ is sufficiently regular (see Section~\ref{sec:subTwoParamFlow}) so that the flow $\Phi$ (as given by \eqref{eq:DefOfFlow}) is twice continuously differentiable in $s$ and $t$, then $\Phi$ is consistent in the sense of \eqref{eq:SchwarzComp}. 
Furthermore, commuting mixed derivatives \eqref{eq:SchwarzComp} lead to the following PDE relating the velocity $v$ and $u$,
\begin{align}\label{eq:SchwarzCompPDE}
        \partial_s u + v \cdot \nabla u - u \cdot \nabla v - \partial_t v = 0\,,
    \end{align}
which is the vanishing Lie bracket condition \citep[Section 4.2]{abraham_manifolds_1988}.

\subsection{Regularity of Physics-Time Velocity}\label{sec:PropertiesOfU}
Using the regularity of $\Phi$ and the concept of consistency, we can derive more properties of the physics-time velocity $u$ defined as $\partial_t \Phi = u \circ \Phi$ besides that it is unique and solves the continuity equation (Proposition~\ref{prop:uniqueness}), even if we have learned only $v$ that induces the flow $\Phi$ as with \eqref{eq:DefOfFlow}. In particular, the consistency relation \eqref{eq:SchwarzCompPDE} links derivatives of $u$ to derivatives of $v$, so regularity of $v$ propagates through $\Phi$ and translates into corresponding regularity properties of $u$.

\paragraph{Regularity of $u$ determined by regularity of $v$}
We derive a closed-form expression for $u$. 
\begin{proposition}
\label{prop:AnalyticU}
Let $\Phi$ be a $C^2$-diffeomorphism so that it is consistent and \eqref{eq:SchwarzCompPDE} holds. Set $\bfx_{t,s} = \Flow(\bfa, t, s)$ . Then, the physics-time velocity defined as
\begin{multline}
\label{eq:AnalyticUForm}
    u(\bfx_{t,s}, t, s) = D\Flow(\bfa, t, s) \\ \cdot \int_0^s \left[ D\Flow(\bfa, t, \sigma) \right ]^{-1} \partial_t v(\bfx_{t,\sigma}, t, \sigma) \, d\sigma\,.
\end{multline}
satisfies \eqref{eq:VelocityU} and its Jacobian $Du$ is given by 
\begin{equation}
    (Du)(\bfx_{t,s}, t, s) D\Flow(\bfa, t, s)  =  \frac{\mathrm d}{\mathrm d t} D\Flow(\bfa, t, s).
\end{equation}
\end{proposition}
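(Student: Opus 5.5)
We plan to work with $W(\bfa,t,s) := \partial_t \Flow(\bfa,t,s)$ and derive an ODE in the sampling time $s$ for it. This is the variation-of-constants idea behind the formula. Since $\Flow$ is $C^2$, the Schwarz condition \eqref{eq:SchwarzComp} holds. Differentiating \eqref{eq:DefVViaPhi} in $t$ with the chain rule gives
\begin{equation}
\partial_s W(\bfa,t,s) = \partial_t\partial_s \Flow = Dv(\bfx_{t,s},t,s)\, W(\bfa,t,s) + \partial_t v(\bfx_{t,s},t,s),
\end{equation}
with initial condition $W(\bfa,t,0)=0$ by \eqref{eq:FlowCondS0}. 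This is a linear inhomogeneous ODE in $s$, with $\bfa$ and $t$ fixed.

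Next we identify a fundamental matrix for the homogeneous part. Differentiating \eqref{eq:DefVViaPhi} in $\bfa$ gives $\partial_s D\Flow(\bfa,t,s) = Dv(\bfx_{t,s},t,s)\, D\Flow(\bfa,t,s)$ with $D\Flow(\bfa,t,0)=\bfI$. Because $\Flow(\cdot,t,s)$ is a diffeomorphism, $D\Flow$ is invertible for all $s$. Duhamel's formula therefore gives
\begin{equation}
W(\bfa,t,s) = D\Flow(\bfa,t,s)\int_0^s [D\Flow(\bfa,t,\sigma)]^{-1}\partial_t v(\bfx_{t,\sigma},t,\sigma)\,\rd\sigma .
\end{equation}
One can check this directly by differentiating the right-hand side in $s$: the product rule and $\partial_s D\Flow = Dv\,D\Flow$ reproduce the ODE above, and uniqueness of linear ODE solutions finishes the argument. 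Since $\bfx_{t,s}=\Flow(\bfa,t,s)$ and $\Flow(\cdot,t,s)$ is invertible, setting $u(\bfx_{t,s},t,s) := W(\bfa,t,s)$ is well defined. By Proposition~\ref{prop:uniqueness} this is the unique $u$ satisfying \eqref{eq:VelocityU}, which yields \eqref{eq:AnalyticUForm}.

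For the Jacobian identity, we differentiate $u(\Flow(\bfa,t,s),t,s) = \partial_t\Flow(\bfa,t,s)$ with respect to $\bfa$. The chain rule gives $(Du)(\bfx_{t,s},t,s)\,D\Flow(\bfa,t,s) = D_{\bfa}\partial_t \Flow(\bfa,t,s)$. The mixed derivatives in $\bfa$ and $t$ commute by the $C^2$ assumption, so the right-hand side equals $\frac{\rd}{\rd t} D\Flow(\bfa,t,s)$.

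The main obstacle is regularity bookkeeping. First, $u$ must be differentiable in $x$, which we get because $u(\cdot,t,s) = W(\Flow^{-1}(\cdot,t,s),t,s)$ is $C^1$ as a composition of $C^1$ maps. Second, exchanging $\partial_t$ with $\partial_s$ and with $D_{\bfa}$ requires $\Flow\in C^2$ jointly in $(\bfa,t,s)$, which holds by assumption.
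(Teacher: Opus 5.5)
Your proposal is correct and is essentially the paper's argument. Both proofs rest on the fact that $\partial_t\Flow$ and the right-hand side of \eqref{eq:AnalyticUForm} solve the same linear ODE $\partial_s W = Dv(\bfx_{t,s},t,s)\,W + \partial_t v(\bfx_{t,s},t,s)$ with $W|_{s=0}=0$ and fundamental matrix $D\Flow$, and both get the Jacobian identity by differentiating $u\circ\Flow=\partial_t\Flow$ in $\bfa$ and commuting $D_{\bfa}$ with $\partial_t$. The difference is only organizational: the paper first checks that the formula satisfies the compatibility condition \eqref{eq:SchwarzCompPDE} and then matches ODEs, whereas you derive the ODE for $\partial_t\Flow$ directly and solve it by Duhamel, which also avoids the paper's slightly circular step of writing $\partial_t\partial_s\Flow=(\partial_t v+u\cdot\nabla v)\circ\Flow$ before $u\circ\Flow=\partial_t\Flow$ is established.
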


The analytic expression \eqref{eq:AnalyticUForm} of $u$ directly shows that it inherits regularity from $v$. For example, if $v$ is $k$-times continuously differentiable in the state variable, and at least continuously differentiable in $t$ and $s$ so that $\partial_t v$ is well defined, then $u$ is $k-1$-times continuously differentiable in the state variable. This loss of one order is due to $D\Phi$ and $(D\Phi)^{-1}$ in \eqref{eq:AnalyticUForm}. 

Additionally, the $H^1$-norm of $u$ is bounded as follows:  
\begin{proposition}\label{prop:H1NormU}
Let $u$ satisfy the compatibility condition \eqref{eq:SchwarzCompPDE} with $u _{s=0} \equiv 0$. Then,
    $\| u \|_{H^1(\rho)}(t, 1) \leq \int_0^1 \| \partial_t v \|_{H^1(\rho)}(t, s) e^{ \int_s^1 \left( \frac{1}{2}\| \nabla^2 v \|_{\infty} + 2\| \nabla v \|_{\infty} \right)(t, \sigma) \, \rd \sigma } \, \rd s.
    $
\end{proposition}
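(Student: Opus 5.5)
The plan is an energy estimate in sampling time $s$ along the characteristics $\bfx_{t,s}=\Flow(\bfa,t,s)$, followed by Grönwall's inequality. Here $\|\cdot\|_{H^1(\rho)}(t,s)$ is the $H^1$ norm against the law $\rho_{t,s}:=\Flow(\cdot,t,s)\sharp\nu$ of $\bfx_{t,s}$. Since $\rho_{t,1}=\rho(t)$, the left-hand side is the value at $s=1$. Fix $t$ and set
\[
E(s)=\mathbb{E}_{\bfa\sim\nu}\big[\,|u(\bfx_{t,s},t,s)|^2+|Du(\bfx_{t,s},t,s)|^2\,\big]=\|u\|^2_{H^1(\rho)}(t,s).
\]
The assumption $u|_{s=0}\equiv 0$ gives $E(0)=0$, because then also $Du|_{s=0}\equiv 0$.

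\textbf{Step 1 (transport of $u$ and $Du$).} By the chain rule and \eqref{eq:DefVViaPhi}, $\frac{\rd}{\rd s}\,u(\bfx_{t,s},t,s)=(\partial_s u+v\cdot\nabla u)(\bfx_{t,s})$. By the compatibility condition \eqref{eq:SchwarzCompPDE} this equals $(u\cdot\nabla v+\partial_t v)(\bfx_{t,s})$. Next I differentiate \eqref{eq:SchwarzCompPDE} in the state variable. This needs $u\in C^2$ in $x$, which I take from the regularity discussion after Proposition~\ref{prop:AnalyticU}, assuming $v\in C^3$, or alternatively argue by density. Along characteristics this gives
\[
\frac{\rd}{\rd s}\,Du(\bfx_{t,s})=\big(Dv\,Du-Du\,Dv+\nabla^2 v[u]+D\partial_t v\big)(\bfx_{t,s}).
\]
Here the term $v\cdot\nabla(Du)$ has been absorbed into the total derivative, and the term $(Du)(Dv)$ coming from differentiating $v\cdot\nabla u$ appears with a minus sign.

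\textbf{Step 2 (differential inequality).} Differentiating under $\mathbb{E}_\nu$ and using Cauchy--Schwarz pointwise yields
\[
\tfrac{\rd}{\rd s}|u|^2\le 2\|\nabla v\|_\infty|u|^2+2|u|\,|\partial_t v|,
\]
\[
\tfrac{\rd}{\rd s}|Du|^2\le 4\|\nabla v\|_\infty|Du|^2+2\|\nabla^2 v\|_\infty|u|\,|Du|+2|Du|\,|D\partial_t v|.
\]
I bound $2|u||Du|\le |u|^2+|Du|^2$, take expectations, and apply Cauchy--Schwarz in $L^2(\nu)$ to the source terms, which gives $\mathbb{E}[|u||\partial_tv|+|Du||D\partial_tv|]\le E^{1/2}\|\partial_t v\|_{H^1(\rho)}(t,s)$. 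Altogether,
\[
E'(s)\le\big(4\|\nabla v\|_\infty+\|\nabla^2 v\|_\infty\big)E+2E^{1/2}\,\|\partial_t v\|_{H^1(\rho)}(t,s).
\]
Note that the expectation is over the fixed measure $\nu$, so no divergence or Jacobian terms from the changing density $\rho_{t,s}$ appear. This is why only $\|\nabla v\|_\infty$ and $\|\nabla^2 v\|_\infty$ enter.

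\textbf{Step 3 (Grönwall).} I pass to $y=E^{1/2}$, which turns the inequality into the linear bound $y'\le(2\|\nabla v\|_\infty+\tfrac12\|\nabla^2 v\|_\infty)y+\|\partial_t v\|_{H^1(\rho)}$. With $y(0)=0$, the variation-of-constants form of Grönwall integrated from $0$ to $1$ gives exactly the claimed bound, with kernel $\exp\int_s^1(\cdots)\,\rd\sigma$.

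The main technical obstacle is that $E^{1/2}$ is not differentiable where $E=0$, in particular at $s=0$. I would work with $y_\varepsilon=(E+\varepsilon)^{1/2}$ instead. It satisfies the same inequality, because $E\le E+\varepsilon$ and $E^{1/2}\le y_\varepsilon$. I then apply Grönwall and let $\varepsilon\to0$. A secondary point is justifying differentiation under $\mathbb{E}_\nu$. This follows from the assumed global bounds on $\nabla v$ and $\nabla^2 v$, which give locally uniform-in-$s$ integrable dominating functions via the representation \eqref{eq:AnalyticUForm}, provided $\partial_t v$ and $D\partial_t v$ are square integrable along the flow. I would state that as a standing assumption; otherwise the right-hand side of the bound is infinite and the claim is trivial.
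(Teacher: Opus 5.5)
Your proposal is correct and follows essentially the same route as the paper. Both arguments run an energy estimate for $u\circ\Flow$ and $\nabla u\circ\Flow$ in $L^2(\nu)$, use the compatibility condition \eqref{eq:SchwarzCompPDE} and its spatial derivative to replace the transport derivative, merge the source terms into $\|\partial_t v\|_{H^1(\rho)}$ by Cauchy--Schwarz, absorb the $\nabla^2 v$ cross term by Young's inequality, and finish with Gr\"onwall; your $\varepsilon$-regularization $(E+\varepsilon)^{1/2}$ and your regularity caveats (differentiating \eqref{eq:SchwarzCompPDE} in $x$, differentiating under $\mathbb{E}_\nu$) make rigorous steps the paper passes over when it simply divides by $\|u\|_{H^1(\rho)}$.
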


In practice, we use the flow matching/stochastic interpolant approaches to obtain the field $v$ \cite{albergo2023stochastic, lipman2022flow, liu_flow_2023}. In this case,
\begin{proposition}[\citet{chen_lipschitz-guided_2025}, Proposition 2.3]
\label{prop:v_for_SIs}
    Let $\nu = \mathcal N(0, Id)$ and $\rho(t)$ be given. Define the stochastic interpolant from $\nu$ to $\rho(t)$, with $(\bfa, \bfx_t) \sim \nu \otimes \rho(t)$ and $I_s(\bfa, \bfx_t) = \alpha(s) \bfa + \beta(s) \bfx_t$, where $\alpha(0) = \beta(1) = 1$, $\alpha(1) = \beta(0) = 0$ and $\alpha, \beta$ are smooth functions of $s$. Assume the density of $I_s(t)$ exists, is smooth in space, and denote it by $\rho_I(t,s)$. Then,
    \begin{align}
        v(\bfx, t, s) &:= \mathbb{E}_{}[\partial_s I_s(\bfa, \bfx_t) | I_s(\bfa, \bfx_t) = \bfx] \\
        &= \frac{\partial_s \beta}{\beta} \bfx + \alpha^2 \left( \frac{\partial_s \beta}{\beta} - \frac{\partial_s \alpha}{\alpha} \right) \nabla \log \rho_I(\bfx,t,s). \nonumber
    \end{align}
\end{proposition}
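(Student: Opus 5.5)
The plan is to show that the conditional expectation defining $v$ can be computed explicitly once we use the Gaussian structure of $\nu$. Fix $t$ and $s\in(0,1)$. Write $\partial_s I_s = \dot\alpha\,\bfa + \dot\beta\,\bfx_t$, where dots denote $s$-derivatives. On the event $I_s=\bfx$ we can eliminate $\bfx_t$ through $\bfx_t = (\bfx-\alpha\bfa)/\beta$. This gives
\begin{equation}
\partial_s I_s = \frac{\dot\beta}{\beta}\bfx + \Big(\dot\alpha - \frac{\alpha\dot\beta}{\beta}\Big)\bfa .
\end{equation}
Hence
\begin{equation}
v(\bfx,t,s) = \frac{\dot\beta}{\beta}\bfx - \alpha\Big(\frac{\dot\beta}{\beta}-\frac{\dot\alpha}{\alpha}\Big)\mathbb E[\bfa\mid I_s=\bfx].
\end{equation}
The whole statement therefore reduces to Tweedie's identity $\mathbb E[\bfa\mid I_s=\bfx] = -\alpha\,\nabla\log\rho_I(\bfx,t,s)$. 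Substituting this turns the coefficient into $+\alpha^2(\dot\beta/\beta-\dot\alpha/\alpha)$, which is the claimed formula.

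To prove Tweedie's identity, note that conditionally on $\bfx_t$ the variable $I_s$ is Gaussian with mean $\beta\bfx_t$ and covariance $\alpha^2 Id$, because $\bfa\sim\mathcal N(0,Id)$ is independent of $\bfx_t$. This gives the density representation
\begin{equation}
\rho_I(\bfx,t,s) = \int (2\pi\alpha^2)^{-d/2} e^{-|\bfx-\beta\bfy|^2/(2\alpha^2)}\,\rho(\bfy,t)\,\rd\bfy .
\end{equation}
Differentiating under the integral sign gives
\begin{equation}
\nabla\rho_I = -\alpha^{-2}\int(\bfx-\beta\bfy)\,p(\bfx\mid\bfy)\,\rho(\bfy,t)\,\rd\bfy .
\end{equation}
Dividing by $\rho_I$ yields $\nabla\log\rho_I = -\alpha^{-2}\,\mathbb E[\bfx-\beta\bfx_t\mid I_s=\bfx] = -\alpha^{-1}\mathbb E[\bfa\mid I_s=\bfx]$, as required.

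The main obstacle is justifying the differentiation under the integral sign and the conditional-expectation manipulations. The Gaussian kernel and its $\bfx$-gradient are dominated, locally uniformly in $\bfx$, by an integrable function times $\rho(\bfy,t)$. That is enough to make the differentiation rigorous, and it uses the assumed smoothness and positivity of $\rho_I$ wherever the quotient is taken. The endpoints need separate care: at $s=0$ we have $\beta=0$, and at $s=1$ we have $\alpha=0$. There I would state the formula on the open interval $(0,1)$ and interpret the endpoints by continuity, or note that the combination $\alpha^2\nabla\log\rho_I$ remains meaningful in the limit. Since this is quoted from \citet{chen_lipschitz-guided_2025}, the argument can be kept at this level.
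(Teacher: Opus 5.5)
Your proof is correct: rewriting $\partial_s I_s = \frac{\partial_s\beta}{\beta} I_s + \bigl(\partial_s\alpha - \alpha\frac{\partial_s\beta}{\beta}\bigr)\bfa$ and applying the Gaussian-convolution (Tweedie) identity $\mathbb{E}[\bfa \mid I_s = \bfx] = -\alpha\nabla\log\rho_I(\bfx,t,s)$ gives exactly the stated formula for $s\in(0,1)$, and the endpoints do follow by continuity. The paper gives no proof of its own and only cites \citet{chen_lipschitz-guided_2025}; your noise/score decomposition is the standard argument for this identity, so it is a sound self-contained justification of the cited result.
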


We see that $v$ inherits regularity from $\rho(t)$ as well as the particular choice of stochastic interpolant used to construct the intermediate $\rho_I(t,s)$. We can refine this statement using a result from \cite{daniels_contractivity_2025}: When $\rho(x,t) \propto \exp(-V(x,t))$, then
\begin{align}
    \rho_I(x,t,s) = \log \int e^{ -V(y,t) - \frac 1 2 \frac{\beta(s)^2}{\alpha(s)^2} |y|^2 + \frac{\beta(s)}{\alpha(s)^2} y \cdot x } \rd y \, .
\end{align}
This formula makes it very explicit that the regularity of $v$, and in particular its derivative in $x$ and $t$, depends entirely on that of $t \mapsto \rho(t)$.

\subsection{Relation to Minimal  Energy (OT)}\label{sec:OTDiscussion}
In general, the physics-time velocity field $u$ that we obtain via the two-parameter flow $\Phi$ is not the minimal kinetic energy field that minimizes \eqref{eq:MinKinEnergy} over all admissible velocity fields that solve the continuity equation. We now discuss one specific case when we recover the minimal kinetic energy field and give an intuition that in general we do not.

\paragraph{An example when our physics-time velocity is of minimal kinetic energy} Consider the Gaussian time marginals $\rho(t) =   \mathcal{N}(0, \Sigma(t))$ with time-varying covariance matrix $\Sigma(t) \in \mathbb{R}^{d \times d}$. The base measure $\nu$ is standard normal. We derive the sampling-time velocity $v$ with stochastic interpolants and show that the induced physics-time velocity is linear and symmetric in $\bfx$, which means it has gradient form and thus minimizes the kinetic energy \eqref{eq:MinKinEnergy} among all admissible vector fields.

\begin{proposition}\label{prop:OT:ExplicitUV}
     Let $I_s(t)$ be a stochastic interpolant from $\nu$ to $\rho(t)$ as in Proposition \ref{prop:v_for_SIs} and $v(\bfx, t, s) = \mathbb{E}[\partial_s I_s(\bfa, \bfx_t) | I_s(\bfa, \bfx_t) = \bfx]$. Suppose $\rho(t) = \mathcal N(0, \Sigma(t))$ is Gaussian. The corresponding physics-time velocity \eqref{eq:VelocityU} is $u(\bfx, t, s) = U(t, s)\bfx$ with a matrix $U(t, s) \in \mathbb{R}^{d \times d}$.  The matrix $U(t, s)$ is symmetric if and only if $\partial_t \Sigma(t)$ commutes with $\Sigma(t)$, in which  case $u$ minimizes the  kinetic energy \eqref{eq:MinKinEnergy} among all admissible velocity fields. 
    
\end{proposition}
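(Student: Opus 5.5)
The plan is to compute everything in closed form, using that all the objects are Gaussian and linear. Fix $t$ and write $\bfx_t = \Sigma(t)^{1/2} z$ with $z \sim \mathcal N(0,\bfI)$ independent of $\bfa$. Then $I_s = \alpha(s)\bfa + \beta(s)\bfx_t$ is centered Gaussian with covariance
\[
C(t,s) = \alpha(s)^2 \bfI + \beta(s)^2 \Sigma(t).
\]
The pair $(\partial_s I_s, I_s)$ is jointly Gaussian, so the conditional expectation in Proposition~\ref{prop:v_for_SIs} is linear. Its matrix is $\mathrm{Cov}(\partial_s I_s, I_s)\, C^{-1}$. Since $\mathrm{Cov}(\partial_s I_s, I_s) = \alpha\alpha' \bfI + \beta\beta'\Sigma = \tfrac12 \partial_s C$, this gives $v(\bfx,t,s) = V(t,s)\bfx$ with $V = \tfrac12 (\partial_s C) C^{-1}$. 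Alternatively, one can read this off the score formula in Proposition~\ref{prop:v_for_SIs} with $\nabla\log\rho_I = -C^{-1}\bfx$.

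Next I solve the flow \eqref{eq:DefOfFlow} explicitly. For fixed $t$, every $C(t,s)$ and $\partial_s C(t,s)$ is a polynomial in the single symmetric matrix $\Sigma(t)$, so all these matrices commute across $s$. I then guess, and verify by differentiation, that $\Phi(\bfa,t,s) = R(t,s)\bfa$ with $R = C^{1/2}$. Indeed $\partial_s R = \tfrac12 (\partial_s C) C^{-1/2} = V R$, and $R(t,0) = \bfI$ because $\alpha(0)=1$ and $\beta(0)=0$. This $\Phi$ is smooth and linear-invertible, so Proposition~\ref{prop:uniqueness} applies. Differentiating $\Phi$ in $t$ and using \eqref{eq:VelocityU} gives
\[
u(\bfx,t,s) = U(t,s)\bfx, \qquad U = (\partial_t R)\, R^{-1}.
\]
This establishes linearity. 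As a consistency check, the result should agree with Proposition~\ref{prop:AnalyticU}.

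For symmetry, note that $R$ is symmetric positive definite and $\partial_t R$ is symmetric. Hence $U^\top = R^{-1}\partial_t R$, and $U$ is symmetric iff $\partial_t R$ commutes with $R$. The main obstacle is converting this into a condition on $\Sigma$, because $R$ is a square root and not simply a polynomial in $\Sigma$ once $t$ varies. I would handle this with the Sylvester equation obtained by differentiating $R^2 = C$:
\[
(\partial_t R) R + R(\partial_t R) = \partial_t C = \beta^2 \partial_t\Sigma.
\]
Since $R \succ 0$, this equation has a unique solution. In an eigenbasis of $R$ with eigenvalues $r_i$, it reads $(\partial_t R)_{ij} = (\partial_t C)_{ij}/(r_i+r_j)$.

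Both directions follow from this formula. If $\partial_t R$ commutes with $R$, then $\partial_t C = 2R\,\partial_t R$ commutes with $R$. Conversely, if $\partial_t C$ commutes with $R$, then $(\partial_t C)_{ij} = 0$ whenever $r_i \neq r_j$. The formula then gives $(\partial_t R)_{ij} = 0$ for those pairs as well, so $\partial_t R$ commutes with $R$.

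Finally, $\partial_t C$ commutes with $R$ iff it commutes with $R^2 = C = \alpha^2\bfI + \beta^2\Sigma$. For $\beta(s)\neq 0$, this holds iff $\partial_t\Sigma$ commutes with $\Sigma$. The case $s=0$ is trivial since $U=0$ there, and the relevant case for the physics-time dynamics is $s=1$, where $\beta=1$.

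For minimality, when $U(t,1)$ is symmetric we have $u(\cdot,t,1) = \nabla \varphi$ with $\varphi(\bfx,t) = \tfrac12 \bfx^\top U(t,1)\bfx$. By Proposition~\ref{prop:uniqueness}, $u(\cdot,t,1)$ is admissible. Any other admissible field is $u + w$ with $\nabla\cdot(\rho w) = 0$. Integrating by parts, $\int \langle\nabla\varphi, w\rangle \rho \,\mathrm dx = 0$; the boundary terms vanish by the Gaussian decay of $\rho$ and polynomial growth of $\varphi$. Therefore $\mathcal E(u+w) = \mathcal E(u) + \tfrac12\int\!\!\int |w|^2\rho\,\mathrm dx\,\mathrm dt \ge \mathcal E(u)$, which shows $u$ minimizes \eqref{eq:MinKinEnergy} among all admissible fields.
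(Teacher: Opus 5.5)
Your proof is correct and follows the paper's argument: the same linear $v$ with $C=\alpha^2\bfI+\beta^2\Sigma$ (the paper's $M$), the explicit flow $\Phi=C^{1/2}\bfa$, $U=(\partial_t C^{1/2})C^{-1/2}$, the reduction of symmetry to $[C^{1/2},\partial_t C^{1/2}]=0$ through the Sylvester equation, and minimality from the gradient form (you prove the orthogonality directly where the paper cites Ambrosio et al.). Your entrywise solution of the Sylvester equation in the eigenbasis of $R$ supplies the justification the paper only gestures at via simultaneous diagonalizability, and your restriction to $\beta(s)\neq 0$ is genuinely needed: at $s=0$ one has $U=0$, which is symmetric for any $\Sigma$, so the ``only if'' direction fails there rather than being trivial.
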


\paragraph{Discussion of minimal vs non-minimal kinetic energy fields} For example, consider a diagonal $\Sigma(t) = \operatorname{diag}(\sigma_1(t), \dots, \sigma_d(t))$, which means that over time $t$ the Gaussian only ``stretches'' or ``shrinks'' but the principal directions (eigenvectors) remain the same. Such an evolution is compatible with pure gradient-field transport; no other dynamics are needed. Because $\partial_t\Sigma(t)$ is also diagonal in this case and diagonal matrices commute, the $u$ obtained as per Proposition~\ref{prop:OT:ExplicitUV} has minimal kinetic energy. In contrast, if the eigenvectors of $\Sigma(t)$ rotate with time $t$, then also the principal components of the distribution change over time $t$, which is a rotational evolution. In this setting, $\Sigma(t)$ and $\partial_t \Sigma(t)$ are not commuting anymore and our approach does not recover a minimal-kinetic energy field.
Importantly, we do not view this as a limitation of the method. Rather, it reflects the presence of genuine rotational dynamics in the data. In such cases, enforcing a purely gradient (potential) structure can lead to highly irregular fields (\citet{gigli_second_2011}, Example 2.4), whereas our construction allows the learned field $u$ to represent the intrinsic rotation.
In particular, this also means that our velocity $u$ can capture curl and rotations more naturally than a time-dependent gradient field can, as we will show in the numerical experiments.   

\paragraph{Enforcing properties of $u$} In some settings, it can be desirable to enforce certain constraints on the horizontal velocity $u$ that are, for example, informed by physical considerations. Since $u$ is determined entirely by $v$ through equation \eqref{eq:AnalyticUForm}, these constraints have to be set on $v$ and ultimately come down to the choice of stochastic interpolant.

We give a concrete example: When $\nabla \cdot v = 0$ for all $x$, $t$, and $s$, then it follows that $\nabla \cdot u = 0$. This can be seen by taking the divergence of the compatibility condition, which, together with $\nabla \cdot v = 0$, implies
\begin{align}
    (\partial_s + v \cdot \nabla) (\nabla \cdot u) = 0 \Rightarrow \frac{\rd}{\rd s} (\nabla \cdot u)(\Phi_{t,s}(a)) = 0
\end{align}
for all $a$, $t$, and $s$. It follows that $\nabla \cdot u$ is preserved along the flow and therefore $(\nabla \cdot u)(x,t,s=1) = (\nabla \cdot u)(\Phi_{t,s=1}^{-1}(x),t, s=0) = 0$. This allows us to restrict the velocities learned by the TPF method to divergence-free fields.

\begin{figure*}[h]
\resizebox{\textwidth}{!}{\input{pyqg_icml_med.tex}}
\caption{Barotropic flow: Our two-parameter flow approach scales population-dynamics inference to turbulent systems with states in $d > 10^4$ dimensions. It learns a fast-to-evaluate population-dynamics model that bypasses fine-scale advection motion (see lower kinetic energy of TPF model, which is emphasized in the trajectory videos found in the supplementary material) while preserving the self-organization/vortex-merging effects that are needed to match distributional quantities such as enstrophy. In contrast, pointwise mean-squared error (MSE) trajectory fitting targets individual sample paths and, in this chaotic regime, remains accurate only over short time horizons and fails to reliably reproduce population-level statistics.}
\label{fig:VortexMerging}
\end{figure*}

\begin{figure*}
\resizebox{\textwidth}{!}{\input{particles_icml_med.tex}}
\caption{Particle instabilities: Our TPF method learns population dynamics that accurately generalize across unseen Debye-length parameters in these Vlasov-Poisson instability benchmarks and match fine phase-space structure.}
\label{fig:vlasov}
\end{figure*}

\section{Training and Inference}

We now describe how to train a physics-time velocity field $u$ by first training a sampling-time velocity with standard conditional flow matching and then extracting $u$.

\paragraph{Step 1: Learning the conditional flow matching model}

Let us consider training data given in the form of samples from the time marginals. Consider the sets $\{\bfx_{t_k}^{(i)}\}_{i = 1}^N \sim \rho({t_k})$ for time steps $k = 0, \dots, K$ with $0 = t_0 < \dots < t_K = T$, each containing $N$ samples per time marginal. These are independent samples at all times $t_0, \dots, t_K$ and thus no trajectories in physics time are required. Furthermore, the number of time-marginal samples can differ across time steps.

We train a single conditional flow-matching model $v(\bfx,s,t;\theta_v)$ using samples $\{\bfx_{t_k}^{(i)}\}_{i = 1}^N$ across all physics times $t_k$. For $k = 0, \dots, K$, we then evaluate this model at $t = t_k$ to obtain the time-conditioned velocity fields $v(\cdot,\cdot,t_k) : \mathbb{R}^d \times [0,1] \to \mathbb{R}^d$. Each conditional slice can be integrated in sampling time $s$ to (approximately) transport samples from the base distribution $\nu$ to the corresponding time marginal $\rho_{t_k}$.

\paragraph{Step 2: Extracting physics-time velocity}

Building on the velocities $v_0, \dots, v_K$, we use them to generate $i = 1,\ \dots, M$ trajectories $\hat{\bfx}_{t_0}^{(i)}, \hat{\bfx}_{t_1}^{(i)}, \dots, \hat{\bfx}_{t_{K}}^{(i)}$ corresponding to the $M$ realizations $\bfa^{(1)}, \dots, \bfa^{(M)}$ of samples of the base distribution $\nu$. Importantly, all states within a trajectory with index $i$ are obtained from the same noise realization $\bfa^{(i)}$ and thus the states within a trajectory are coupled across physics time, in contrast to the original independent samples from the marginals.  In particular, if our theory from above applies with a $C^2$-diffeomorphism $\Phi$, then for each fixed $\bfa \sim \nu$, the map $t \mapsto \Phi(\bfa, t, 1)$ is continuously differentiable  and therefore admits well-defined derivatives in time. Consequently, the synthetic trajectories can be seen as discretized samples generated with $u$. 

We use the generated trajectories to estimate $u$ by fitting a (neural-network) parametrized velocity $u_{\theta}: \mathbb{R}^d \times [0, T] \to \mathbb{R}^d$ with a standard regression loss to the synthetic trajectories as 
\begin{equation}\label{eq:regressionforu}
   \min_{\theta} \sum_{i = 1}^M \sum_{k = 0}^{K-1} \left\| u_{\theta}(\hat\bfx_{t_k}^{(i)}, t_k)  - \frac{\hat\bfx_{t_{k+1}}^{(i)} - \hat\bfx_{t_k}^{(i)}}{t_{k+1} - t_k} \right\|^2.
\end{equation}

\paragraph{Generating new samples (inference) and generalizing over physics parameters}

Once we have $u_{\theta}$, we can integrate it with a new realization $\bfx_0$ of $\rho(0)$ as initial condition to generate a trajectory $\tilde{\bfx}_{t_1}, \dots, \tilde{\bfx}_{t_K}$. Notice we only evolve in physics time when integrating with the learned $u_{\theta}$. In particular, with explicit Euler time-stepping, only a single network evaluation is needed per time step $k = 1, \dots, K$. 

We can also add conditioning on physics parameters that we denote by $\mu$. In this case, sampling-time velocity $v$ depends on $\bfx, t, s, \mu$, physics-time velocity $u$ at $s=1$ depends on $\bfx, t, \mu$, and the loss \eqref{eq:regressionforu} is extended with an empirical expectation over the training values of $\mu$.

\paragraph{Summary of our procedure} We obtain our physics-time velocity field in three steps as follows.

1. \textbf{Train sampling-time transport.}  
Learn a conditional flow-matching model
$\Flow(\bfa,t,s)$ that maps a base distribution $\nu$ to each time marginal $\rho(t)$.

2. \textbf{Generate coherent trajectories.}  
Sample $\bfa^{(i)} \sim \nu$ and define
$\hat\bfx_{t_k}^{(i)} = \Phi(\bfa^{(i)}, t_k, 1)$
by integrating $v$ in sampling time $s$ for all $t_k$.
This couples samples across physics time via a shared noise realization.

3. \textbf{Regress physics-time dynamics.}  
Fit $u_\theta(\bfx,t)$ to the finite-difference time derivatives of the synthetic trajectories using~\eqref{eq:regressionforu}.

\section{Experiments}

\paragraph{Evolving Gaussian mixture}
Let us compare the velocities that we obtain with our two-parameter flows to optimal-transport/minimal-kinetic-energy velocities. Consider therefore a Gaussian mixture distribution $\rho(t)$ in $\mathbb{R}^2$ for which means and covariances undergo a random walk over $t$. We train a two-parameter flow model as well as a flow with optimal-transport couplings between successive time steps following \citet{terpin_2024_learning}; see Appendix~\ref{appendix:EvolvingGaussians}. Figure~\ref{fig:tpf_vs_ot} tracks samples over time and color-codes their position, which shows that our approach leads to more regular dynamics than induced by the optimal-transport couplings. Note that this is in agreement with our discussion in Section~\ref{sec:Intro:TPFNotOT}.

\paragraph{Barotropic flow}
To demonstrate the effectiveness of our approach on high-dimensional distributions, we study data from the time marginals generated from a vorticity transport equation. We set the initial vorticity as a Gaussian random field and then obtain the vorticity on a $d = 128 \times 128$ grid as samples over time. It is a classic result~\cite{McWilliams1984CoherentVortices} that such randomly initialized vorticity fields self-organize into isolated coherent vortices (see top row in Figure~\ref{fig:VortexMerging}). However, resolving this evolution requires small time steps due to the Courant-Friedrichs-Lewy (CFL) condition, and the resulting dynamics are dominated by rapid advective transport and vortex rotation. For our purposes of computing population-level quantities of interest, we seek population-level dynamics that capture the slow vortex-merging process without reproducing the full fine-scale transport.

\textbf{\emph{Results}} Our two-parameter flow model recovers precisely the self-organization into isolated coherent vortices (bottom row of Fig.~\ref{fig:VortexMerging}(a)). The dominant vortices remain approximately stationary while the vorticity field gradually organizes into fewer and larger coherent structures with substantially less motion and thus less kinetic energy. As shown in Fig.~\ref{fig:VortexMerging}(b), the kinetic energy of the inferred population dynamics is roughly two orders of magnitude lower than that of the physical trajectories, indicating that fast advective motion has been effectively removed while the population-level effect (the aggregation of vorticity into coherent vortices) is preserved. This is further confirmed by the population enstrophy in Fig.~\ref{fig:VortexMerging}(c), where the temporal decay of enstrophy closely matches that of the true dynamics. Additionally, because on the population level we do not have to resolve fast advective motion, we can take larger time steps and achieve orders of magnitude speedup in runtime (inference time) compared to using the numerical simulation with the physics model  (Fig.~\ref{fig:VortexMerging}(d)). Note that an MSE-fitted model to the trajectories aims to capture the fast advective motion but ultimately fails to capture the vortex-merging behavior and so leads to a wide mismatch in the enstrophy. 

\textbf{\emph{TPF scales population-dynamics inference to high dimensions}} We emphasize that this example at dimension $d = 128^2$ is a significant step up  compared to many other existing works on population dynamics inference. The largest dimensions considered in previous work we are aware of are $32^2$ \cite{neklyudov_action_2023, blickhan2025dicediscreteinversecontinuity}, $100$ \cite{chen_deep_2023, persiianov_learning_2025}, and $50$ \cite{terpin_2024_learning, petrovic_curly_2025}.

\paragraph{Kolmogorov flow} Let us now consider the Kolmogorov flow following \cite{doi:10.1073/pnas.2101784118}. We directly use the code coming with \cite{doi:10.1073/pnas.2101784118} to generate training data, analogous to the setup for the barotropic flow; see \Cref{appendix:KolmogorovFlow}. We then train a TPF model and an MSE-fitted (operator learning) model analogous to the previous section. In this example, the energy spectrum should decay as $\omega^{-3}$, where $\omega$ is the frequency (wave number). We obtain decay rates shown in  \Cref{tab:SpectrumDecay} 
when fitting to the spectrum decay of samples generated with the physics model, an MSE-fitted (operator learning) model, and TPF. Our TPF matches the exponent $-3$ closely (within estimation error when comparing to the exponent estimated from the data generated with the physics model). These results show that the TPF generated samples are physically meaningful in the sense that they capture properties that depend on the distribution of the states.

\paragraph{Vlasov-Poisson instabilities}

Let us now consider particles governed by the Vlasov-Poisson system.  
We investigate the two-stream and bump-on-tail instabilities in two-dimensional phase space with $N = 25 \times 10^3$ samples. We adopt the experimental configuration described in \citep[Section B.2]{berman_parametric_2024}. Specifically, the system is parametrized by a characteristic (Debye) length parameter, denoted by $\mu$. 

\textbf{\emph{Results}} We report the averaged $W_2$ error over test characteristic length parameters $\mu$ in Table~\ref{tab:w2}; see Appendix~\ref{appendix:Vlasov} for details. We compare to action matching (AM) \cite{neklyudov_action_2023}, HOAM \cite{berman_parametric_2024}, DICE \cite{blickhan2025dicediscreteinversecontinuity}, and JKONet* \cite{terpin_2024_learning}, which are alternative methods to learn population dynamics; see Introduction.  Density plots of the phase-space density are shown in Figure \ref{fig:vlasov}. We see that the HOAM \cite{berman_parametric_2024} and DICE \cite{blickhan2025dicediscreteinversecontinuity} methods, which are based on the ansatz $u = \nabla \varphi$, do not manage to capture the fine structures of the phase-space filamentation as $t \approx 1$. As we mentioned earlier, training with the ansatz $u = \nabla \varphi$ guarantees to obtain only gradient dynamics but it can be challenging to capture rotational dynamics. As the vortices in the two-stream and bump-on-tail instability rotate, our approach that is not tied to gradient dynamics can achieve higher accuracy. 
\begin{table}
\centering
\caption{Kolmogorov flow: Samples generated with TPF match the energy spectrum decay $\omega^{-3}$ closely.}
\label{tab:SpectrumDecay}
\resizebox{1.0\columnwidth}{!}{
\begin{tabular}{r|rrr}
\toprule
time step  & phys.~model & MSE-fitted & TPF (ours) \\
\midrule
32 & $\omega^{-2.800}$ & $\omega^{-4.164}$&$\omega^{-2.712}$\\
64 & $\omega^{-2.879}$ & $\omega^{-3.928}$&$\omega^{-2.894}$\\
95 & $\omega^{-2.932}$ & $\omega^{-3.818}$&$\omega^{-2.890}$\\
127 & $\omega^{-2.970}$ & $\omega^{-3.805}$&$\omega^{-2.880}$\\
\bottomrule
\end{tabular}}
\end{table}

\begin{table}
\centering
\caption{Particle instabilities: TPF achieves competitive accuracy compared to other population-dynamics inference approaches.}
\label{tab:w2}
\begin{tabular}{l|cc|cc}
\toprule
 & \multicolumn{2}{c|}{two-stream} & \multicolumn{2}{c}{bump-on-tail} \\
Method 
& avg.\  ($\downarrow$) & final ($\downarrow$)
& avg.\  ($\downarrow$)& final  ($\downarrow$) \\
\midrule
DICE & 4.5e-04 & 7.8e-04 & 6.5e-03 & 1.6e-02 \\
HOAM & 2.8e-03 & 4.0e-03 & 3.8e+01 & 5.5e+01 \\
AM & 4.6e+00 & 1.1e+00 & 2.5e-03 & 4.5e-03 \\
JKONet* & 7.7e-03 &	9.8e-03	& 4.3e-03 &	7.5e-03
 \\
TPF (ours) & \textbf{3.2e-04} & \textbf{4.0e-04} & \textbf{3.1e-04} & \textbf{4.7e-04} \\
\bottomrule
\end{tabular}
\end{table}

\section{Conclusions and Limitations}
\emph{Conclusions} We present two-parameter flows as a practical and principled route to population dynamics inference. We learn scalable base-to-marginal transports with standard conditional flow methods and then extract physics-time velocity fields via regression for fast generation of samples over physics time. In particular, we show that the physics-time dynamics are well-posed and well-behaved as long as the base-to-marginal velocities are regular. Empirically, this yields a robust and efficient population-dynamics-inference method for regimes where OT (and JKO-style) approaches are computationally prohibitive or ill-suited, including high-dimensional problems and rotational dynamics. 

\emph{Limitations} The inferred physics-time dynamics inherit inductive bias from the chosen base-to-marginal transport. Beyond matching marginals and the regularity bounds that we provided, future work is needed to understand what additional structure on the base-to-transport velocity is needed for the induced physics-time velocity to satisfy further criteria such as low kinetic energy, minimal curl, a low Lipschitz constant, or interpretability (see also the discussion at the end of \Cref{sec:OTDiscussion}).


\section*{Acknowledgments}
This material is based upon work supported by the U.S. Department of Energy, Office of Science under Award \#DE-SC0024721. The authors have been funded in part by the Air Force Office of Scientific Research (AFOSR), USA, award FA9550-24-1-0327.

\section*{Impact Statement}
This paper presents work whose goal is to advance the field of Machine Learning. There are many potential societal consequences of our work, none which we feel must be specifically highlighted here.

\bibliography{main}
\bibliographystyle{icml2026}

\newpage
\appendix
\onecolumn
\section{Details about examples}

\subsection{Evolving mixture of Gaussians}\label{appendix:EvolvingGaussians}

A conditional flow matching model learns velocity fields to map a standard Gaussian $\mathcal{N}(0,I)$ to each marginal $\rho(t)$, trained on linear interpolation paths $\bfx_s = (1-s)\bfa + s\bfx_1$. The marginals are constructed by placing a mixture of five Gaussians with random mean values $\sim \mathcal{N}(0, 1.5)$ and covariances $\sim \mathrm{Unif}([0.1, 0.4)]$. Over time, the mean values perform a random walk (step size $= 0.7$) and the covariances perform a random walk on the space $2 \times 2$ matrices, with a projection to the set of positive semidefinite matrices after every step. On top of this, there is a rotational field $0.25 [-x^2, x^1]$ and an inward drift $-0.1 x$ acting on every sample. We generate six time marginals with this procedure with $N = 3000$ samples each.

We compare the trajectories inferred by TPF to those obtained by connecting adjacent time marginals with OT maps. Just as in \cite{terpin_2024_learning}, we can compute the OT couplings exactly using the Hungarian algorithm due to the relatively low number of samples $N$. The results are visualized in Figure \ref{fig:tpf_vs_ot} by coloring all particles at $t=0$ (the color of the sample at $\bfx^{(i)}$ is based on the radial coordinate of the noise sample $\bfa^{(i)} = \Flow^{-1}(\bfx^{(i)}, t, 1)$) and keeping this color constant with $t$.

\subsection{Barotropic flow}\label{appendix:barotropic}
The two-dimensional vorticity transport equation is 
$\partial_t \omega
= - \boldsymbol{u}\cdot\nabla \omega
+ \nu \Delta \omega
+ \kappa \Delta^{4}\omega$, 
posed on the periodic domain $[-\pi,\pi)^2$.
The velocity field $\boldsymbol{u}=(u_1,u_2)^\top$ is obtained from the stream function $\Psi(\boldsymbol{x})$ via
\[
-\Delta \Psi = \omega, 
\qquad
u_1 = \partial_y \Psi, 
\qquad
u_2 = -\partial_x \Psi ,
\]
so that the state of the system is fully determined by the vorticity $\omega$. To study the inviscid regime, we set $\nu=0$ and use a hyperviscosity coefficient $\kappa=10^{-2}$ to ensure numerical stability. We discretize the domain using a Fourier spectral method with $128\times128$ modes and draw the initial vorticity field from a Gaussian random field. Thus, the dimension of our data points is $d = 128^2$. We generate 3000 trajectories with different initializations as training data for conditional flow matching.

The kinetic energy (``field difference'') of the sample ensemble is computed as $\frac{1}{2} \sum_{j=1}^K \sum_{i = 1}^N |\bfx_{t_{j+1}}^{(i)} - \bfx_{t_j}^{(i)}|^2 / |t_{j+1} - t_j|$ with $\bfx \in \mathbb R^{128^2}$, where $\bfx_t$ represents a vorticity field $\omega$ on a $128\times 128$ grid.

We parameterize the flow matching velocity field with a convolutional U-Net architecture. 
The model is trained for 400 epochs with a learning rate of \(10^{-4}\) using the AdamW optimizer. 
The U-Net uses a base channel width of 32 and a six-level encoder-decoder structure with skips with channel multipliers \((1,2,2,4,4,8)\) and strides \((1,2,1,2,1,2)\). 
Physics time and sampling time are embedded through an MLP with embedding dimension 32 and hidden width 64 and injected into the network. We generate 1024 coherent trajectories for the regression model. The regression models (both MSE and TPF) use the same architecture but only train for 200 epochs (note that these converge much quicker than the flow matching model). The code to recreate the experimental results is available at \url{https://github.com/Algopaul/tpf}.

\subsection{Kolmogorov flow}\label{appendix:KolmogorovFlow}
We use the setup following  \cite{doi:10.1073/pnas.2101784118} and described and implemented in \url{https://github.com/google/jax-cfd} under ``forced turbulence.'' The neural network and training setup is analogous to \Cref{appendix:barotropic}. 

\subsection{Vlasov-Poisson instabilities}\label{appendix:Vlasov}
The dynamics of a particle at phase-space position $\bfx_t = [x^1_t, x^2_t]^\top$ are given by
\begin{align}
    \frac{\mathrm d}{\mathrm dt} \begin{bmatrix}
        x^1_t \\ x^2_t
    \end{bmatrix}
    = 
    \begin{bmatrix}
        x^2_t \\ \nabla \phi(t, x^1)
    \end{bmatrix},
\end{align}
where the potential $\phi$ satisfies the Poisson equation:
\begin{align}
    - \mu^2 \Delta \phi = 1 - \int \rho(t, \bfx, \mu) \, \mathrm dx^2\,,
\end{align}
where $\rho$ denotes the phase-space density.

The parameter $\mu$ is selected from the sets $\mu_{\text{train}} \in \{ 1.2, 1.3, \dots, 1.9 \}$ and $\mu_{\text{test}} \in \{ 1.25, 1.85 \}$. The evaluation metric we consider is the $W_2$ distance between the law of the test data $\bfx_t(\mu_{\text{test}}) \sim \rho(\cdot, t, \mu_{\text{test}})$ and the data generated by the population dynamics learned by the two-parameter flow, $\rho^{TPF}$:
\begin{align}\label{eq:W2ErrorAppendixVlasov}
    \Delta(t, \mu) := W_2(\rho, \rho^{TPF})(t, \mu).
\end{align}
In Table \ref{tab:w2}, we report $\frac 1 3 \sum_{t \in {0.33, 0.66, 1.0}} \Delta(t, \mu)$ and the final $\Delta(t, \mu)$ on test data.

For the conditional flow-matching model we use a multilayer perceptron (MLP) parameterization of the velocity field. Training is performed for 1000 epochs with a batch size of 
$10^5$ particles (note that this high batch size is possible since the dimension of each sample is 2) and a learning rate of  $10^{-4}$ using the AdamW optimizer. The vector field is represented by a 6-layer MLP with 256 hidden features per layer, GELU activations, residual (skip) connections, and no batch normalization. The network outputs a two-dimensional velocity field. Experiments are run on an NVIDIA L40S GPU. We generate 60 coherent trajectories at linearly spaced parameters $\mu$ for the regression model. The regression model has the same architecture but is trained with learning rate $10^{-3}$. The code to recreate the experimental results is available at \url{https://github.com/Algopaul/tpf}.

\section{Proofs}
\subsection{Compatibility condition}
\begin{proposition}
    For $\Flow$ of class $C^2$ with velocity fields $u: \partial_t \Flow = u \circ \Flow$ and $v: \partial_s \Flow = v \circ \Flow$, it holds that $\partial_s u + v \cdot \nabla u - u \cdot \nabla v - \partial_t v = 0$.
\end{proposition}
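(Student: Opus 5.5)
The plan is to differentiate the two defining relations $\partial_t \Flow = u\circ\Flow$ and $\partial_s \Flow = v\circ\Flow$ in the \emph{other} parameter and compare, using that mixed partials of $\Flow$ commute since $\Flow\in C^2$ (Schwarz). Throughout, fix $\bfa$ and write $\bfx_{t,s}=\Flow(\bfa,t,s)$. We interpret $v\cdot\nabla u$ as the directional derivative $(Du)\,v$, i.e.\ $\sum_j v_j\partial_{x_j}u$, and similarly $u\cdot\nabla v = (Dv)\,u$. We assume, as implicit in the statement (and guaranteed by Proposition~\ref{prop:uniqueness} for a $C^2$-diffeomorphism), that $u$ and $v$ are $C^1$ in $(\bfx,t,s)$, so the chain rule applies.

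\textbf{Step 1.} Differentiate $\partial_t\Flow(\bfa,t,s)=u(\Flow(\bfa,t,s),t,s)$ with respect to $s$. The chain rule gives
\begin{align}
\partial_s\partial_t\Flow = (Du)(\bfx_{t,s},t,s)\,\partial_s\Flow + \partial_s u(\bfx_{t,s},t,s) = (Du)\,v + \partial_s u,
\end{align}
where all terms are evaluated at $(\bfx_{t,s},t,s)$ and we used $\partial_s\Flow=v\circ\Flow$.

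\textbf{Step 2.} Symmetrically, differentiate $\partial_s\Flow = v\circ\Flow$ with respect to $t$:
\begin{align}
\partial_t\partial_s\Flow = (Dv)\,\partial_t\Flow + \partial_t v = (Dv)\,u + \partial_t v.
\end{align}

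\textbf{Step 3.} By Schwarz's theorem, \eqref{eq:SchwarzComp} holds, so the two expressions agree. Subtracting yields $\partial_s u + (Du)v - (Dv)u - \partial_t v = 0$ at every point $(\bfx_{t,s},t,s)$.

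Since $\Flow(\cdot,t,s)$ is a diffeomorphism onto $\mathbb{R}^d$, every $\bfx$ arises as some $\bfx_{t,s}$, and the identity therefore holds for all $(\bfx,t,s)$.

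The only real subtlety is regularity. We need $u$ to be differentiable, so that Step~1 is legitimate, and we need surjectivity of $\Flow(\cdot,t,s)$ to pass from a statement along the flow to one on all of $\mathbb{R}^d$. I would handle both by writing $u(\bfx,t,s)=\partial_t\Flow(\Flow^{-1}(\bfx,t,s),t,s)$. Here $\partial_t\Flow$ is $C^1$, and $\Flow^{-1}$ is $C^1$ jointly in $(\bfx,t,s)$ by the inverse function theorem applied to $(\bfa,t,s)\mapsto(\Flow(\bfa,t,s),t,s)$. Hence $u$ is $C^1$. The same argument, or the hypothesis directly, gives $v\in C^1$.
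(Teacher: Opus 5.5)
Your proof is correct and follows essentially the same route as the paper: differentiate each defining relation in the other parameter with the chain rule, equate the two results using Schwarz's theorem, and extend the identity to all of $\mathbb{R}^d$ by surjectivity of $\Phi(\cdot,t,s)$. Your additional argument that $u$ and $v$ are $C^1$, via the inverse function theorem applied to $(\bfa,t,s)\mapsto(\Phi(\bfa,t,s),t,s)$, fills in a regularity step that the paper leaves implicit.
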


\begin{proof}
    For twice differentiable $\Flow$, the mixed derivatives with respect to $t$ and $s$ commute as a consequence of Schwarz' theorem: $\left( \partial_t \partial_s - \partial_s \partial_t \right) \Flow(\bfa, t, s) \equiv 0 \; \forall (\bfa, t, s)$.
    Expanding this expression gives, with $\bfx_{t,s} = \Flow(\bfa, t, s)$ for short,
    \begin{align}
        \frac{\mathrm d}{\mathrm d t} \frac{\mathrm d}{\mathrm d s} \Flow(\bfa, t, s)
        = \frac{\mathrm d}{\mathrm d t} v(\bfx_{t,s}, t, s) 
        = \partial_t v(\bfx_{t,s}, t, s) + \frac{d\Flow}{dt}(\bfa, t, s) \cdot \nabla v(\bfx_{t,s}, t, s)
        = (\partial_t v + u \cdot \nabla v)(\bfx_{t,s}, t, s).
    \end{align}
    The computation for $\frac{\mathrm d}{\mathrm d s} \frac{\mathrm d}{\mathrm d t} \Flow(\bfa, t, s)$ proceeds analogously and equating the two yields $(\partial_s u + v \cdot \nabla u - u \cdot \nabla v - \partial_t v)(\bfx_{t,s}, t, s) = 0$. Since there exists for all $\bfx$ a point $\bfa$ such that $\Flow(\bfa, t, s) = \bfx$, the proof is complete.
\end{proof}

\subsection{Proof of Proposition~\ref{prop:uniqueness}}
\begin{proof}[Proof of Proposition~\ref{prop:uniqueness}]

For regularity and uniqueness of the flow, see \citep[Lemma 4.1.9]{abraham_manifolds_1988} and \citep[Theorem 4.1.11]{abraham_manifolds_1988}. We now show that $u$ satisfies the continuity equation at sampling time $s = 1$.

The relation $\Flow(\cdot, t, 1) \sharp \nu = \rho(\cdot, t) $ implies $\langle \phi, \rho(\cdot, t)  \rangle = \langle \phi \circ \Flow(\cdot, t, 1), \nu \rangle$ for any test function $\phi$, where $\langle \phi, \rho \rangle = \int \phi(x) d\rho(\bfx)$. Differentiate in time to find
    \begin{align}
        \langle \phi, \partial_t \rho(\cdot, t)  \rangle = \left \langle \partial_t \Flow(\cdot, t, 1)  \cdot \left( \nabla \phi \circ \Flow(\cdot, t, 1)  \right), \nu \right \rangle = \langle u(\cdot, t, 1)  \cdot \nabla \phi, \rho(\cdot, t)  \rangle,
    \end{align}
    which is the weak form of the continuity equation.
\end{proof}

\subsection{Proof of Proposition~\ref{prop:AnalyticU}}
\begin{proof}[Proof of Proposition~\ref{prop:AnalyticU}]
\emph{Analytic form of $u$:} Let $u(\bfx_{t,s}, t, s) = D\Flow(\bfa, t, s) \cdot \int_0^s \left[ D\Flow(\bfa, t, \sigma) \right ]^{-1} \partial_t v(\bfx_{t,\sigma}, t, \sigma) \, d\sigma$.

    Take derivative in s:
    \begin{align}
        \frac{\mathrm d}{\mathrm d s} u(\bfx_{t,s}, t, s) = (\partial_s u + v \cdot \nabla u)(\bfx_{t,s}, t, s)
    \end{align}
    For the right-hand side, we use:
    \begin{align}
        \frac{\mathrm d}{\mathrm d s} D\Flow(\bfa, t, s) = Dv(\bfx_{t,s}, t, s) D\Flow(\bfa, t, s)
    \end{align}
    and $\frac{\mathrm d}{\mathrm d s} \int_0^s f(\sigma) \, \mathrm d\sigma = f(s) \; \forall f$. Putting it all together, we find
    \begin{align}
        \frac{\mathrm d}{\mathrm d s} u(\bfx_{t,s}, t, s) = Dv(\bfx_{t,s}, t, s) u(\bfx_{t,s}, t, s) + \underbrace{D\Flow(\bfa, t, s) \left[ D\Flow(\bfa, t, s) \right ]^{-1}}_{= \, \mathrm{I}} \partial_t v(\bfx_{t,s}, t, s)
    \end{align}
    and hence
    \begin{align}
        \partial_s u + v \cdot \nabla u = u \cdot \nabla v + \partial_t v
    \end{align}
    evaluated at $(\bfx_{t,s}, t, s)$.
    This relation holds for all $\bfx_{t,s}$, in particular we can find for all $\bfx$ an $\bfa$ such that $\bfx = \bfx_{t,s} = \Phi(\bfa, t, s)$ at times $(t, s)$.

We have shown that $u$ satisfies the compatibility condition. It remains to show that this implies that $u \circ \Flow = \partial_t \Flow$. By using $(\partial_t \partial_s - \partial_s \partial_t) \Flow = 0$, we see that
\begin{align}
    \partial_s \left( \partial_t \Flow(\bfa, t, s) \right) = \partial_t \partial_s \Flow(\bfa, t, s) = (\partial_t v + u \cdot \nabla v)(\Flow(\bfa, t, s)) = (\partial_s u + v \cdot \nabla u)(\Flow(\bfa, t, s)),
\end{align}
using the compatibility condition in the last equality. At the same time,
\begin{align}
    \frac{\rd}{\rd s} u(\Flow(\bfa, t, s)), t, s) = (\partial_s u + v \cdot \nabla u)(\Flow(\bfa, t, s)).
\end{align}
Hence, $u \circ \Flow$ and $\partial_t \Flow$ satisfy the same linear ODE and$\frac{\rd}{\rd s} \left( u \circ \Flow - \partial_t \Flow \right) = 0$. Since $\Phi(\bfa, t, 0) = \bfa$ implies $\partial_t \Flow \big |_{s = 0} = 0$ and $u$ (defined in \eqref{eq:AnalyticUForm}) $= 0$ at $s = 0$, we conclude $u \circ \Flow = \partial_t \Flow$ on every integral curve of $\Flow$, i.e. at every $\bfx = \Flow(\bfa, t, s)$.

\emph{Jacobian of $u$:} Denote $\bfx_{t,s} = \Flow(\bfa, t, s)$. By regularity of $\Flow$,
        \begin{align}
            0 &= \left( D_{\bfa} \frac{\mathrm d}{\mathrm d t} - \frac{\mathrm d}{\mathrm d t} D_{\bfa} \right) \Flow(\bfa, t, s) 
            = D_{\bfa} u(\bfx_{t,s}, t, s) - \frac{\mathrm d}{\mathrm d t} D \Flow(\bfa, t, s)
            = (Du)(\bfx_{t,s}, t, s) D\Flow(\bfa, t, s) - \frac{\mathrm d}{\mathrm d t} D \Flow(\bfa, t, s). \nonumber
        \end{align}
\end{proof}

\subsection{Proof of Proposition~\ref{prop:H1NormU}}
\begin{proof}[Proof of Proposition~\ref{prop:H1NormU}] First, note that $\langle f \circ \Phi(\cdot, t, s), \nu \rangle = \langle f, \rho_I(t,s) \rangle$ for all $f$ by definition of the push-forward condition $\Flow(\cdot, t, s) \sharp \nu = \rho_I(\cdot, t, s) $. We now compute (omitting the dependence on $(t, s)$ for the sake of brevity):
    \begin{align}
        \frac{1}{2} \frac{\rd}{\rd s} \| u \circ \Flow \|_{L^2(\nu)}^2 = \langle \left( u \cdot ( \partial_s u + v \cdot \nabla u) \right) \circ \Flow, \nu \rangle = \langle u \cdot ( \partial_t v + u \cdot \nabla v), \rho \rangle \leq \| \partial_t v \|_{L^2(\rho)} \| u \|_{L^2(\rho)} + \| \nabla v \|_{\infty} \| u \|^2_{L^2(\rho)}.
    \end{align}
    We apply here, in order, the total derivative of $u$ with respect to $s$, the compatibility condition, the change of variables from $\nu$ to $\rho$, and H\"older's inequality. $\| \dots \|_\infty$ denotes the maximum norm.

    Completely analogously, we compute
    \begin{align}
        \frac{1}{2} \frac{\rd}{\rd s} \| \nabla u \circ \Flow \|_{L^2(\nu)}^2 = \langle \nabla u : ( \partial_t \nabla v - (\nabla u) \cdot \nabla v + (\nabla v)\cdot \nabla u + u \cdot \nabla^2 v), \rho \rangle,
    \end{align}
    where $\nabla^2 v$ denotes the tensor with entries $\partial_{x_i} \partial_{x_j} v_k $ and $:$ denotes the Frobenius inner product between matrices.

    All together, this implies
    \begin{multline}
        \frac{1}{2} \frac{\rd}{\rd s} \| u \|^2_{H^1(\rho)} \leq \| \partial_t v \|_{L^2(\rho)} \| u \|_{L^2(\rho)} + \| \nabla v \|_{\infty} \| u \|^2_{L^2(\rho)} + \| \partial_t \nabla v \|_{L^2(\rho)} \| \nabla u \|_{L^2(\rho)} \\ + 2\| \nabla v \|_{\infty} \| \nabla u \|_{L^2(\rho)}^2 + \| \nabla^2 v \|_{\infty} \| \nabla u \|_{L^2(\rho)} \| u \|_{L^2(\rho)}
    \end{multline}
    Next, use the Cauchy-Schwarz inequality to obtain
    \begin{align}
        \| \partial_t v \|_{L^2(\rho)} \| u \|_{L^2(\rho)} + \| \partial_t \nabla v \|_{L^2(\rho)} \| \nabla u \|_{L^2(\rho)} &\leq \sqrt{\| \partial_t v \|_{L^2(\rho)}^2 + \| \partial_t \nabla v \|_{L^2(\rho)}^2} \sqrt{\| u \|_{L^2(\rho)}^2 + \| \nabla u \|_{L^2(\rho)}^2}
        \intertext{and Young's inequality for}
        \| \nabla^2 v \|_{\infty} \| \nabla u \|_{L^2(\rho)} \| u \|_{L^2(\rho)} &\leq \frac{1}{2} \| \nabla^2 v \|_{\infty} \left(\| u \|_{L^2(\rho)}^2 + \| \nabla u \|_{L^2(\rho)}^2\right) = \frac{1}{2}\| \nabla^2 v \|_{\infty} \| u \|^2_{H^1(\rho)}
    \end{align}
    to simplify to
    \begin{align}
        \frac{1}{2} \frac{\rd}{\rd s} \| u \|^2_{H^1(\rho)} = \| u \|_{H^1(\rho)} \frac{\rd}{\rd s} \| u \|_{H^1(\rho)} \leq \| \partial_t v \|_{H^1(\rho)} \| u \|_{H^1(\rho)} + \left( \frac{1}{2}\| \nabla^2 v \|_{\infty} + 2\| \nabla v \|_{\infty} \right) \| u \|^2_{H^1(\rho)}.
    \end{align}
    Divide by $\| u \|_{H^1(\rho)}$ and apply Gronwall's Lemma with $u\big|_{s = 0} \equiv 0$ to find
    \begin{align}
        \| u \|_{H^1(\rho)}(t, 1) \leq \int_0^1 \| \partial_t v \|_{H^1(\rho)}(t, s) \exp \left( \int_s^1 \left( \frac{1}{2}\| \nabla^2 v \|_{\infty} + 2\| \nabla v \|_{\infty} \right)(t, \sigma) \, \rd \sigma \right) \rd s
    \end{align}
    as claimed.
\end{proof}

\subsection{Proof of Proposition~\ref{prop:OT:ExplicitUV}}
\begin{proof}[Proof of Proposition~\ref{prop:OT:ExplicitUV}] Note that we are interested in the case with $s = 1$. 
\emph{Step 1:}
First, let us state that
    \begin{equation}
    v(\bfx, t, s)\hspace*{-0.07cm} =\hspace*{-0.07cm} \frac{1}{2} \partial_s \log M(t, s) \bfx,\, 
    \end{equation}
    with matrices $M(t,s) = \alpha(s)^2 + \beta(s)^2 \Sigma(t)$ (note that $\partial_s M$ commutes with $M$). The matrix corresponding to the velocity $u$ is therefore $U(t, s) = (\partial_t M(t, s)^{1/2}) M(t, s)^{-1/2}$.
    The formula for $v$ is standard, see for example \citep[Proposition 3.9]{chen_lipschitz-guided_2025}. We can integrate this flow to obtain
    \begin{align}
        \Flow(\bfa, t, s) = M(t, s)^{1/2} \bfa.
    \end{align}
    We now compute $u$.
    \begin{align}
        \frac{d \Flow}{dt}(\bfa, t, s) &= \partial_t M(t, s)^{1/2} \bfa
        = \left( \partial_t M(t, s)^{1/2} \right) M(t, s)^{-1/2} \Flow(\bfa, t, s),
    \end{align}
    hence
    \begin{align}
        u(\bfx, t, s) = \left( \partial_t M(t, s)^{1/2} \right) M(t, s)^{-1/2} \bfx.
    \end{align}
    To obtain the expression for $U$, note that $\partial_t( M^{1/2} M^{1/2}) = (\partial_t M^{1/2}) M^{1/2} + M^{1/2} \partial_t(M^{1/2}) = \partial_t M = \beta^2 \partial_t \Sigma$, where $(\partial_t M^{1/2}) M^{1/2} = (\partial_t M^{1/2}) M^{-1/2} M = U M$ and $M^{1/2} \partial_t M^{1/2} = M^{1/2} U M^{1/2}$.  In particular, $U(t, s)$ solves the Sylvester equation
    \begin{align}
        UM + M^{1/2} U M^{1/2} = \beta^2 \partial_t \Sigma.
    \end{align}

    \emph{Step 2:}  We have $U - U^T = (\partial_t M^{1/2}) M^{-1/2} - M^{-1/2} \partial_t M^{1/2}$ since $M$ is symmetric. Multiply by (the full-rank) $M^{1/2}$ from the left and right to obtain
    $M^{1/2} \partial_t M^{1/2} - (\partial_t M^{1/2}) M^{1/2} = [M^{1/2}, \partial_t M^{1/2}]$. Since $[M^{1/2}, \partial_t M^{1/2}] = 0 \Leftrightarrow [M, \partial_t M] = 0 \Leftrightarrow [\Sigma, \partial_t \Sigma] = 0$, we find $U = U^T \Leftrightarrow [\Sigma, \partial_t \Sigma] = 0$. The fact that $M^{1/2}$ commutes with its derivative if and only if $M$ does follows from the fact that the two are simultaneously diagonalizable, see for example \citep[Chapter 3]{higham_functions_2008}.
    
    When $U$ is symmetric, $u(\bfx, t, s) = \frac 1 2 \nabla ( \bfx^T U(t, s) \bfx)$ and the admissible gradient field is kinetic energy-optimal \citep[Proposition 8.4.3]{ambrosio_gradient_2005}.
\end{proof}

\end{document}